\documentclass{article}

\usepackage{microtype}
\usepackage{graphicx}
\usepackage{subcaption}
\usepackage{booktabs} %

\usepackage{hyperref}
\usepackage{multirow}
\usepackage{subcaption}
\usepackage{makecell}
\usepackage[most]{tcolorbox}

\usepackage[accepted]{icml2026}

\usepackage{amsmath}
\usepackage{amssymb}
\usepackage{mathtools}
\usepackage{amsthm}

\usepackage[capitalize,noabbrev]{cleveref}

\theoremstyle{plain}
\newtheorem{theorem}{Theorem}[section]

\newtheorem{lemma}[theorem]{Lemma}

\theoremstyle{definition}

\theoremstyle{remark}

\usepackage[textsize=tiny]{todonotes}

\icmltitlerunning{Bullet Trains: Parallelizing Training of Temporally Precise SNNs}

\begin{document}

\twocolumn[
  \icmltitle{Bullet Trains: Parallelizing Training of Temporally Precise\\Spiking Neural Networks}

  \icmlsetsymbol{equal}{*}

  \begin{icmlauthorlist}
    \icmlauthor{Todd Morrill}{Columbia,CSHL}
    \icmlauthor{Christian Pehle}{CSHL}
    \icmlauthor{Anthony Zador}{CSHL}
  \end{icmlauthorlist}

  \icmlaffiliation{Columbia}{Department of Computer Science, Columbia University, New York, NY, USA}
  \icmlaffiliation{CSHL}{Department of Neuroscience, Cold Spring Harbor Laboratory, Cold Spring Harbor, NY, USA}

  \icmlcorrespondingauthor{Todd Morrill}{todd@cs.columbia.edu}

  \icmlkeywords{Machine Learning, ICML}

  \vskip 0.3in
]

\printAffiliationsAndNotice{}  %

\begin{abstract}
Continuous-time, event-native spiking neural networks (SNNs) operate strictly on spike events, treating spike timing and ordering as the representation rather than an artifact of time discretization. This viewpoint aligns with biological computation and with the native resolution of event sensors and neuromorphic processors, while enabling compute and memory that scale with the number of events. However, two challenges hinder practical, end-to-end trainable event-based SNN systems: 1) exact charge--fire--reset dynamics impose inherently sequential processing of input spikes, and 2) precise spike times must be solved without time bins. We address both. First, we use parallel associative scans to consume multiple input spikes at once, yielding up to 44x speedups over sequential simulation while retaining exact hard-reset dynamics. Second, we implement differentiable spike time solvers that compute spike times to machine-precision without discrete-time approximations or restrictive analytic assumptions. We demonstrate the viability of training SNNs using our solutions on four event-based datasets on GPUs.
\end{abstract}

\section{Introduction}
\label{sec:intro}
A commonly cited advantage of spiking neural networks (SNNs) is their ability to operate in an event-based manner, processing information only when spikes occur \citep{merolla2014, davies2018}. On appropriate neuromorphic hardware, this implies energy efficiency, as computation only occurs when spikes are present \citep{roy2019, yao2024}. At the same time, much SNN research still happens on readily available hardware such as graphical processing units (GPUs) so accelerating this workload (e.g., via parallelization) and mirroring properties of biology and event-based neuromorphic hardware (e.g., precise spike times) \citep{carr1990, thorpe1998a, amir2017, richter2024} is an important research direction. In this work, we present a strictly event-based SNN system that processes multiple spike events in parallel, achieving significant speedups over sequential processing. Second, we present differentiable spike time solvers that enable machine-precision spike timing. These two contributions allow us to take a step toward event-native spiking computation, where preserving continuous spike times is the objective---both for biological fidelity and for compatibility with neuromorphic sensing and hardware---and where efficiency follows from operating directly on events. 

\begin{figure}
    \centering
    \includegraphics{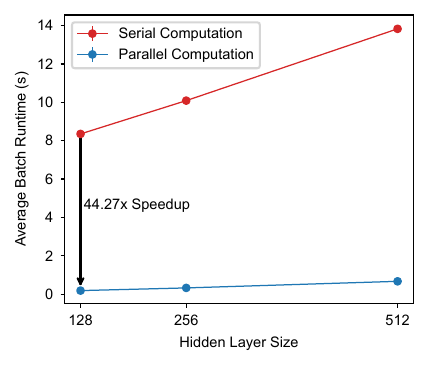}
    \caption{Our parallel method achieves up to 44x speedups over serial processing of spike events on several event-based datasets while retaining exact hard-reset dynamics. The plot shows results for the Spiking Heidelberg Digits (SHD) dataset.    
    }
    \label{fig:teaser}
\end{figure}

\begin{figure*}[!ht]
    \centering
    \includegraphics{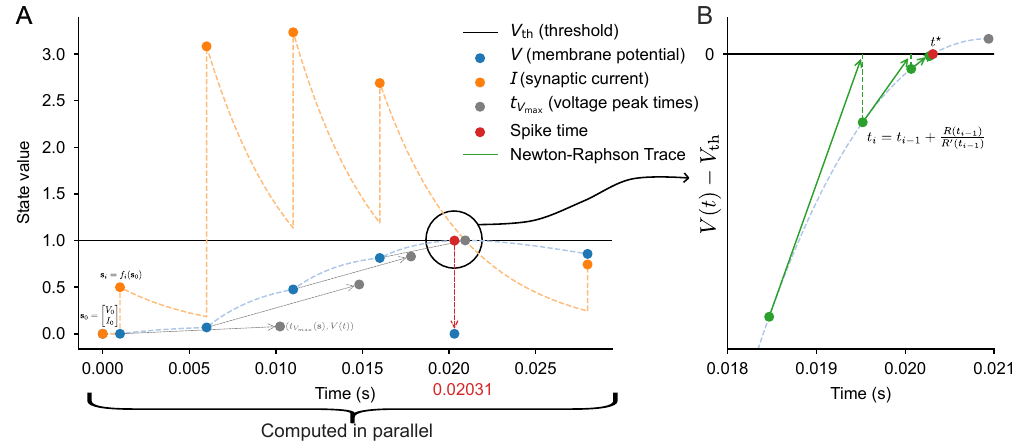}
    \caption{Our contributions visualized---state trajectory of a LIF neuron consuming input spikes in parallel and producing an output spike with our Newton-Raphson solver. \textbf{A} shows the neuron jumping from the initial state $\mathbf{s}_0$ at time 0 (composed of membrane potential and synaptic current) directly to all future states $\mathbf{s}_1 = f_1(\mathbf{s}_0), \mathbf{s}_2 = f_2(\mathbf{s}_0), \ldots$ in parallel using associative scans. We use lightweight analytical checks to determine if an output spike will occur in the interval between any consecutive pair of input spikes by computing---in parallel---$t_{V_{\max}}(\mathbf{s})$, the time of maximum voltage starting at the interval's left endpoint, and $V(t_{V_{\max}})$ the voltage at that time. The voltage state is hard reset at the spike time. \textbf{B} shows our iterative Newton-Raphson solver finding the output spike time $t^{\star}$, computed to machine-precision.}
    \label{fig:methods}
\end{figure*}

SNNs are challenging to parallelize along the time dimension due to the sequential nature of ``charge--fire--reset'' dynamics. After consuming each and every input spike, a neuron must determine whether it will produce a spike itself before the next input spike arrives.
Performed naively, SNNs must process one spike at a time, leading to significant computational bottlenecks on modern parallel hardware such as GPUs \citep{shrestha2018,engelken2023a,fang2023,li2024,zhong2024,feng2025}. In the past few years, many methods have been proposed to parallelize spiking dynamics, but they all modify the neuron model in some way, such as removing resets entirely, which can reduce the non-linear expressivity of the neuron and/or deviate from the way biological neurons operate.

In this work, we propose a method for parallel training of exact hard-reset spiking dynamics using parallel associative scans. The key idea is to process input spike trains in chunks, where each chunk is processed in parallel using associative scans. Crucially, this is spike-aware chunking: lightweight analytical checks quickly determine whether a neuron spikes within a chunk and, if so, locate the first such spike, allowing the simulation to jump directly to that time.
The next chunk resumes processing input spikes immediately after the output spike. In practice, even if a portion of work is discarded, the overall speedup relative to sequential processing is significant because of the parallelism afforded by modern hardware. The result is a parallel system that achieves up to 44x speedups over sequential processing on several event-based datasets while retaining exact hard-reset dynamics.

A second challenge that arises in strictly event-based SNNs is determining spike times precisely without the use of discrete-time bins. Nearly all SNN implementations to date operate on discrete-time grids \citep{bauer2023,hammouamri2024,nowotny2025}, which means that neurons are performing computation at every time step, regardless of whether spikes are present. The result of this modeling choice is computation---and often memory usage---that scales with the number of time steps rather than the number of spikes, severely limiting the number of time steps that can be simulated. Temporal resolution is limited by the time-step size and spike ordering within a time bin cannot be determined \citep{bauer2023} (see Figure \ref{fig:discretization_effects}). In contrast, several systems have proposed analytical solutions for continuous spike times, but these methods often impose restrictions on the neuron model, such as the relationship between synaptic and membrane time constants in leaky-integrate-and-fire (LIF) neurons \citep{mostafa2018,goltz2021,engelken2023a}.

Here, we apply several methods for differentiable spike time solvers---namely Newton-Raphson and Bisection numerical root solvers---which resolve spike times to machine-precision without the need for discrete-time approximations. This removes a constraint on SNN systems (e.g., to support heterogeneous time constants \citep{perez-nieves2021}), facilitates the exploration of neuro-inspired theories of time-based computation \citep{yang2012,xie2024}, matches the temporal resolution of event-based sensors and neuromorphic hardware \citep{muller2024}, and enables compute and memory to scale in proportion to the number of spikes.

\section{Related Work}
\label{sec:related}
\paragraph{Parallelizing Spiking Dynamics.}
Biological neurons go through a ``charge--fire--reset'' cycle and in the ``reset'' stage, the neuron's membrane potential is set to a baseline value \citep{hodgkin1952}. It has been argued that this hard reset is essential for implementing precise feature detection \citep{berry1998}. The sequential dependency of the ``charge--fire--reset'' cycle in simulated Leaky Integrate-and-Fire (LIF) neurons presents a fundamental barrier to parallel training on modern hardware. While sub-threshold dynamics are linear and can be parallelized using associative scans---similar to state-space models \citep{schone2025}---the non-linear spike generation and hard reset introduce a sequential dependency that prevents complete parallelization. Early approaches, such as the Parallel Spiking Neuron (PSN), circumvented this ``charge--fire--reset'' cycle by removing the reset mechanism entirely, effectively reducing the neuron to a linear filter solvable via convolutions \citep{fang2023}. While efficient, this sacrifices the non-linear regulation (i.e., ``hard reset'') of the membrane potential essential for expressivity and biological fidelity \citep{bauer2023}. Subsequent methods like the Parallel Spiking Unit \citep{li2024} and SPikE-SSM \citep{zhong2024} decoupled the reset from integration, but rely on ``soft reset'' mechanisms (linear subtraction) rather than ``hard reset'' (reset-to-zero). Most recently, Fixed-point Parallel Training (FPT) attempted to model hard resets via iterative scans, yet to guarantee convergence of the fixed-point iteration, FPT must relax the discontinuous spike generation into a continuous sigmoid surrogate during the forward pass \citep{feng2025}. Consequently, existing parallel methods either fundamentally alter the neuron model (removing resets), assume linearity (soft resets), or solve a relaxed approximation (smoothed dynamics).
To the best of our knowledge, no prior work has achieved parallel acceleration of exact, hard-reset dynamics without approximation.
\begin{figure}
    \centering
    \includegraphics{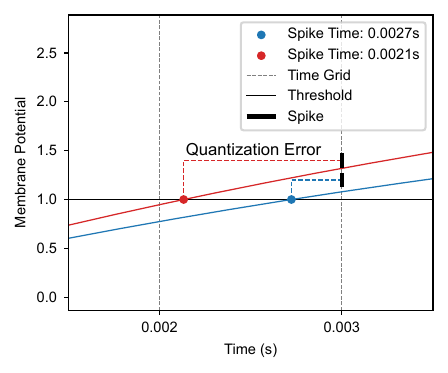}
    \caption{Discrete-time binning causes quantization errors and loses spike ordering. Here, two spikes occurring at different times (0.0021s and 0.0027s) both round to 0.003s, making them indistinguishable to downstream neurons.}
    \label{fig:discretization_effects}
\end{figure}

\paragraph{Precise Spike Time Solvers.} SNN implementations must make two core design choices: 1) whether to use surrogate gradients or exact gradients, and 2) whether to use a discrete-time grid or operate in continuous time. If exact gradients are coupled with continuous time, a third decision must be made: whether to use analytical spike time solutions or numerical spike time solvers. Surrogate gradient (SG) methods \citep{shrestha2018,neftci2019} are inextricably linked to discrete time steps. We therefore focus our discussion on exact gradient methods to retain machine-precision spike times. Many exact gradient methods to date operate on discrete-time grids, where spike times are approximated to the nearest time step \citep{bauer2023,nowotny2025,meszaros2025}. Several methods have used analytical solutions for continuous spike times, but these methods often impose restrictions on the neuron model \citep{bohte2000,mostafa2018,goltz2021,engelken2023a}. Newton-Raphson and Bisection methods have been proposed for finding spike times in neural simulations \citep{dhaene2009} but have not been demonstrated for training SNNs with gradient descent on large-scale tasks on GPUs.\footnote{\citet{wunderlich2021} used the TOMS 748 \citep{alefeld1995} root solving algorithm on a CPU.} In this work, we demonstrate the effectiveness of training SNNs using numerical spike time solvers based on the Newton-Raphson and Bisection methods, which can be applied to a wide variety of neuron models---without restrictive assumptions---while working with exact gradients in continuous-time.

\section{Event-Based Spiking Neural Networks}
\label{sec:methods}
\begin{figure*}
    \centering
    \includegraphics{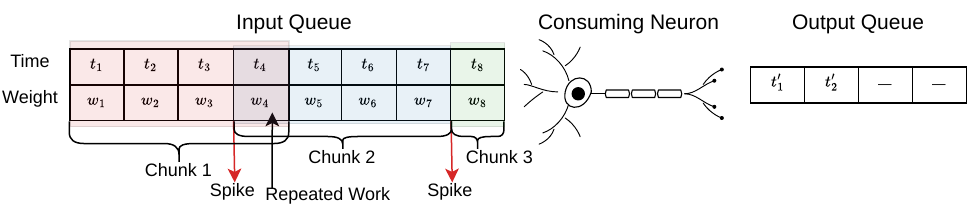}
    \caption{An example input queue feeding a neuron and producing an output queue. The input chunk size is 4 spikes, which means 4 input spikes are consumed in parallel. After consuming 3 input spikes in chunk 1, the neuron produces an output spike. The work done to consume the 4th input spike in chunk 1 is discarded, and the next chunk resumes processing input spikes immediately after the output spike time.}
    \label{fig:parallel_chunks}
\end{figure*}
Here we describe our proposed methods for building event-based spiking neural networks that 1) process multiple input spike events in parallel while maintaining temporal causality with exact hard-reset dynamics, and 2) determine spike times precisely using differentiable numerical spike time solvers.

\subsection{LIF Neurons}
Our SNNs are built from current-based LIF neurons \citep{rotter1999, gerstner2002}, one of the most popular neurons in the SNN community \citep{wunderlich2021, goltz2021, nowotny2025,hammouamri2024,meszaros2025}, with the following sub-threshold dynamics---the dynamics between spikes---on the synaptic input current $I(t)$ and membrane potential $V(t)$.
\begin{align}
    \tau_m \frac{dV}{dt} &= -V(t) + I(t) \\
    \tau_s \frac{dI}{dt} &= -I(t)
\end{align}
where $\tau_m$ and $\tau_s$ are the membrane and synaptic time constants, respectively. Closed-form solutions for these differential equations are provided in Appendix Section \ref{sec:lif_ode_solution}.

A LIF neuron emits a spike at time $t$ when the membrane potential crosses a threshold $V_{\text{th}}$ (i.e., $V(t) \geq V_{\text{th}}$), as depicted in Figure \ref{fig:methods}A and then resets to a reset potential $V_{\text{reset}}$, which is set to 0 in this work. Neurons indexed by $i$ that are downstream of the spiking neuron $j$ have their synaptic input currents incremented by the synaptic weights $w_{ij}$, which are learned during training. We also introduce learnable synaptic delays $d_{ij}$, which simulate the time a spike would take to be transmitted from one neuron to the next. The synaptic input current is then
\begin{equation}
    I(t) \gets I(t) + \sum_{k} w_{ij} \delta(t - (t_k + d_{ij})) \label{eq:spike_input}
\end{equation}
where $t_k$ is the time of the $k^{\text{th}}$ spike from neuron $j$ and $\delta(t)$ is the Dirac delta function.

\subsection{Event-Driven Implementation with Exact Gradients}
\begin{figure}
    \centering
    \includegraphics{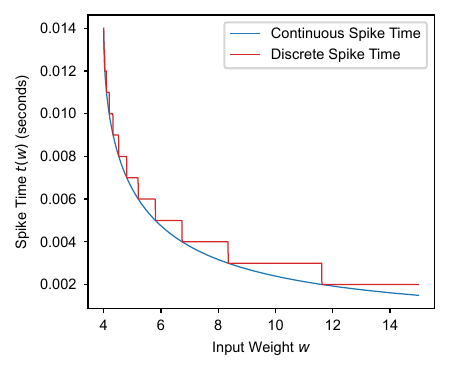}
    \caption{Spike time as a function of synaptic weight for different time constant configurations. Discrete-time approximation with $\Delta t = 1\text{ms}$ diverges from the ground truth spike time at nearly all weight values, while our method closely tracks the ground truth spike time.}
    \label{fig:time_vs_weight}
\end{figure}

We implement our SNN in \texttt{JAX} in an event-based manner. The core idea of an event-based SNN is that if for each neuron in the system you know which of two events will happen first, namely whether a neuron should first consume an incoming spike or first emit a spike itself, then you can directly advance the simulation to the appropriate event. This highlights the distinction between our event-driven approach and discrete-time stepped approaches, which simulate the entire network's state (i.e., membrane potential and synaptic input current) at every discrete point in time. We train the model's parameters via gradient descent using exact gradients. The key idea for being able to use exact gradient descent is that spike \textit{times} are differentiable functions of the SNN's synaptic weights and delay parameters \citep{mostafa2018,wunderlich2021}. Intuitively, if a weight onto a postsynaptic neuron is increased, it has the effect of charging the neuron up faster, and in turn accelerating the spiking time of the postsynaptic neuron (see Figure \ref{fig:time_vs_weight}). The event-based exact gradient approach is in contrast to surrogate gradient methods, which use a discrete-time stepped approach and lose the ability to disambiguate spikes within the step window (see Figure \ref{fig:discretization_effects}) and therefore machine-precision spike times.

\subsubsection{Parallel Processing of Spike Events}
\label{sec:parallel}
\paragraph{Deriving the associative scan.} Processing incoming spikes to a LIF neuron one-by-one becomes a computational bottleneck because it scales linearly with the number of input spikes as $O(N)$ where $N$ is the number of spikes to be processed.  State-space models similarly face this problem due to their recurrent nature. One solution to this problem is to parallelize computation along the time-dimension through the use of an associative scan. The key observation is to note that sub-threshold transitions of the LIF neuron state---its voltage and input current---can be expressed as affine maps, which admit an associative operation on these maps. These affine maps allow one to compute a transition from an initial voltage and input current state, $\mathbf{s}_0 = [V_0, I_0]^{\top}$, to all neuron states in the future at each input spike time, $\mathbf{s}_1 = [V_1, I_1]^{\top}, \ldots, \mathbf{s}_N = [V_N, I_N]^{\top}$. Importantly, once these affine maps are computed, we can consume all input spikes in parallel using an associative scan and directly advance the neuron to any intermediate state (i.e., any state $\mathbf{s}_i$ for $i \in [1, N]$) without having to sequentially process all previous spikes.
To understand how we derive the associative operator, it is instructive to see how the neuron's state transitions from the first state $\mathbf{s}_0$ to the second state $\mathbf{s}_1$ after consuming the first input spike. Let $C_m = \exp{(-t / \tau_m)}$ and $C_s = \exp{(-t / \tau_s)}$. The affine map $f_1: \mathbf{s}_0 \to \mathbf{s}_1$ to transition the neuron from state $i=0$ to state $i=1$, for example, is given by
\begin{align}
    \underbrace{\begin{bmatrix} V_1 \\I_1\end{bmatrix}}_{\mathbf{s}_1} = 
    \underbrace{\begin{bmatrix}C_m & \frac{\tau_s}{ \tau_m - \tau_s}\left[C_m - C_s\right] \\
    0 & C_s
    \end{bmatrix}}_{M_1}
    \underbrace{\begin{bmatrix} V_0 \\I_0\end{bmatrix}}_{\mathbf{s}_0} + \underbrace{\begin{bmatrix} 0 \\w\end{bmatrix}}_{\mathbf{b}_1} \label{eq:affine}
\end{align}
where we have denoted $M_1$ as the matrix that advances the neuron's state, and $\mathbf{b}_1$ as the vector that injects a new spike into the neuron with weight $w$. Equation \ref{eq:affine} follows from closed-form equations for our system (see Appendix Equations \ref{eq:mem_pot} and \ref{eq:curr}) and Equation \ref{eq:spike_input}. Note that $M_1$ is defined by the time between the starting time, $t_0$, and the spike time $t_1$ (i.e., $t = t_1 - t_0$).

\begin{figure}
    \centering
    \includegraphics{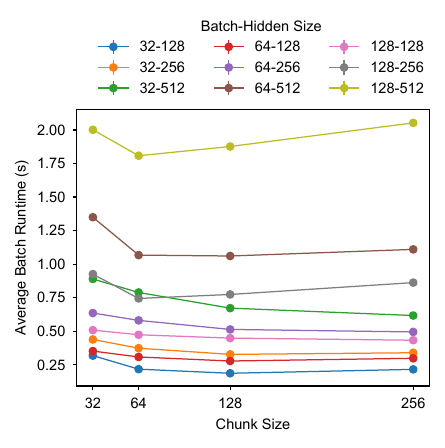}
    \caption{Training time per batch as we vary chunk size on the SHD dataset with hidden dimensions ranging from 128 to 512 and batch size ranging from 32 to 128. Averages are taken over 100 batches.}
    \label{fig:chunk_size}
\end{figure}

\paragraph{Implementing the associative scan.} Examining the composition of two consecutive affine maps (i.e., consuming 2 consecutive input spikes) shows that composing such maps yields another affine map, which leads us to an associative operator commonly called \textit{Combine} in parallel scan algorithms (lemma and proof in Appendix Section \ref{sec:associative_scan}). $f_2: \mathbf{s}_1 \to \mathbf{s}_2$ is given as
\begin{align}
    \mathbf{s}_2 = f_2(\mathbf{s}_1) &= M_2 \mathbf{s}_1 + \mathbf{b}_2 \\
    &= M_2 (M_1 \mathbf{s}_0 + \mathbf{b}_1) + \mathbf{b}_2 \\
    &= \underbrace{M_2 M_1}_{M'_2} \mathbf{s}_0 + \underbrace{M_2 \mathbf{b}_1 + \mathbf{b}_2}_{\mathbf{b}'_2}
\end{align}
where now we can define an affine map to map directly from $\mathbf{s}_0$ to $\mathbf{s}_2$ as
\begin{align}
    \mathbf{s}_2 = f'_2(\mathbf{s}_0) &= M'_{2} \mathbf{s}_0 + \mathbf{b}'_2 
\end{align}
From this we propose a form for the associative \textit{Combine} operator as
\begin{align*}
    \textit{Combine}((M_2, \mathbf{b}_2), (M_1, \mathbf{b}_1)) =(M_2 M_1, M_2 \mathbf{b}_1 + \mathbf{b}_2).
\end{align*}
The important point is that \textit{Combine} allows us to make use of the \texttt{jax.lax.associative\_scan} function, which computes a sequence of $K$ affine maps, $f_1, f_2, \ldots, f_K$, with parallel depth $O(\log{K})$---exponentially faster than the $O(K)$ serial approach because operations can be combined in a tree-like fashion \citep{blelloch1990}. Once we have these affine maps, we can produce the sequence $(\mathbf{s}_0, \mathbf{s}_1, \mathbf{s}_2, \ldots, \mathbf{s}_K)$ in parallel. Since a neuron can produce a spike at any point after consuming a spike (see Figure \ref{fig:methods}) and we would like to avoid discarding excessive amounts of work, we consume input spikes in chunks, where each chunk contains $K$ input spikes that are processed in parallel using the associative scan approach, which implies a parallel compute graph depth of $O(C \log{K})$ where $C$ is the number of chunks.

\paragraph{Speculative execution.} We speculatively execute the associative scan for a chunk of input spikes (e.g., 128 spikes), which means we process the chunk without knowing whether an output spike will occur within the chunk. Between each pair of input spikes in a chunk, we determine if an output spike will occur in that interval (see Section \ref{sec:spike_time_solvers}). If the neuron spikes, we discard any work done within the chunk after the output spike time and revisit those input spikes on the next iteration (see Figure \ref{fig:parallel_chunks}). Discarding work refers only to intermediate computations \emph{after} the first within-chunk output spike; the event order and hard resets for processed spikes are unchanged.

\paragraph{Fixed compute budget.} For efficient GPU just-in-time (JIT) compilation, we pre-specify a fixed compute budget---the number of chunks $C$ and a per-neuron output-spike cap $S_{\max}$ (Appendix Section \ref{sec:event_queues})---so all processed spikes follow exact hard-reset semantics, though if the cap activates, some input spikes may remain unprocessed within the compiled budget. In practice, we consume $\approx 100\%$ of SSC and SHD input spikes in layer 1 ($S^{(1)}_{\max}=42$) and layer 2 ($S^{(2)}_{\max}=28$) (Appendix Figures \ref{fig:work_retention_ssc}-\ref{fig:work_retention_shd_512}). Spike-count regularizers (Appendix Section \ref{sec:spike_count_regularization}) keep firing sparse, maintaining high consumption without sacrificing accuracy.

\subsubsection{Differentiable Spike Time Solvers}
\label{sec:spike_time_solvers}
Employing root solvers to find precise spike times introduces several sub-problems: 1) defining the function whose root corresponds to the spike time, 2) finding intervals that contain spikes to guarantee 100\% exactness of the simulation, 3) choosing a root solver, and 4) deriving gradients through the root solver. We address each of these in turn.

\paragraph{Defining the key relation for spike times.} A core capability of any SNN system is to detect the times at which a neuron produces spikes. This involves solving for $t^{\star}$, the time at which the membrane potential $V(t)$ crosses the threshold $V_{\text{th}}$. We use root solvers, which keep our system event-based and temporally precise. A spike occurs at the time at which the following relation $R$ is satisfied
\begin{align}
    R(V_0, I_0, V_{\text{th}}, t) = V(V_0, I_0, t) - V_{\text{th}} = 0
\end{align}
namely when the membrane potential $V$ is equal to the threshold, $V_{\text{th}}$, where $V(V_0, I_0, t)$ is the membrane potential at time $t$ given initial conditions $V_0$ and $I_0$ (see Equation \ref{eq:mem_pot}). For brevity, we denote the neuron's initial state and threshold as $\mathbf{p} = (V_0, I_0, V_{\text{th}})$, allowing us to write the spike condition compactly as $R(\mathbf{p}, t(\mathbf{p})) = 0$, where $t(\mathbf{p})$ is the spike time as a function of these parameters.

\begin{figure}
    \centering
    \includegraphics{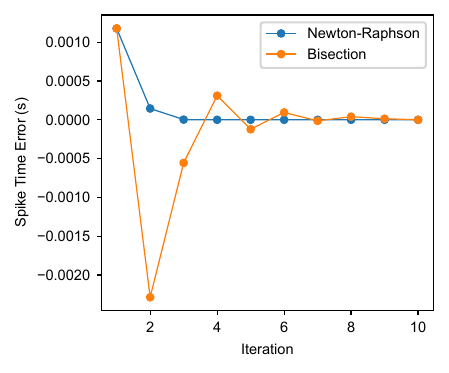}
    \caption{Convergence of the Newton-Raphson and Bisection solvers towards the ground truth spike time.}
    \label{fig:solver_convergence}
\end{figure}

\paragraph{Finding intervals with spikes.} We can determine analytically \emph{if} a spike will occur in the interval between two consecutive input spikes, which is essential for guaranteeing the exactness of our simulation. We do this by evaluating $V(t_{\text{now}})$ and $V(t_{\text{next}})$, the voltage at the start and end of the interval, respectively, as well as at the time of maximum voltage within the interval, $V(t_{V_{\max}})$. Note that $t_{\text{now}}$ is the left endpoint of the interval and $t_{\text{next}}$ is the right endpoint (a relative, not absolute, time).
The parallel associative scan method provides us the needed neuron state values, $V(t_{\text{now}})$ and $V(t_{\text{next}})$. We can also evaluate the membrane potential at $t_{V_{\max}}$ to get $V(t_{V_{\max}})$ (see Appendix Section \ref{sec:newton_raphson_solver} for a derivation). Crucially, we can now apply the following logic to determine if a spike occurs in the interval $[t_{\text{now}}, t_{\text{next}}]$:
\begin{equation*}
\scalebox{0.9}{
  $\begin{aligned}
    &\text{If } V(t_{\text{next}}) \geq V_{\text{th}} \implies \text{$t^{\star}$ from root solver} \\
    &\text{If } V(t_{V_{\max}}) \geq V_{\text{th}}, t_{V_{\max}} \in [t_{\text{now}}, t_{\text{next}}) \implies \text{$t^{\star}$ from root solver} \\
    &\text{Else } \implies \text{no spike in } [t_{\text{now}}, t_{\text{next}}].
  \end{aligned}$
}
\end{equation*}

\paragraph{Root solvers.} There are many choices of root solvers for recovering the output spike time $t^{\star}$. Two effective options are Newton-Raphson, which iteratively refines the spike time (see Figure \ref{fig:methods}B), and Bisection, which repeatedly bisects an interval containing the spike. We describe both in Appendix Section \ref{sec:numerical_solver}, but primarily use Newton-Raphson due to its fast convergence rate (see Figure \ref{fig:solver_convergence}). A natural question is: what happens if our solvers fail to converge? In our setting, we can guarantee the discovery of spike times due to our ability to bracket the spike time and the unimodality of the voltage function $V(t)$ between input spikes (see Appendix Section \ref{sec:newton_raphson_solver}), which means that there is only one spike time per interval between input spikes.

\paragraph{Gradients through the root solver.} A final challenge is computing gradients of the spike time $t^{\star}$ with respect to the neuron's parameters $\mathbf{p} = (V_0, I_0, V_{\text{th}})$. Rather than differentiating through the iterative solver steps, we use the Implicit Function Theorem \citep{mostafa2018, chen2018,wunderlich2021} to compute these gradients directly. In particular, we aim to compute the Jacobian $\partial t^{\star} / \partial \mathbf{p}$. We describe the derivation in Appendix Section \ref{sec:ift_gradients}.

\begin{figure}
    \centering
    \includegraphics{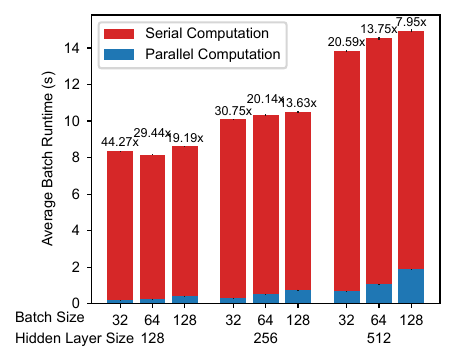}
    \caption{Speedup of our parallel SNN implementation over a sequential event-based SNN implementation with exact hard-reset dynamics on the SHD dataset as we vary the hidden dimension and batch size. The model architecture uses 2 feedforward hidden layers and delays. Averages are taken over 100 batches (forward and backward pass) over 3 runs and chunk size is set to 128.}
    \label{fig:speedup}
\end{figure}

\subsection{Translating Spike Trains to Class Logits}
\label{sec:training}
Our model's final layer is composed of $N_{\text{cls}}$ weighted leaky integrators (i.e., no spiking non-linearity) \citep{nowotny2025}, one for each class in the classification task, that accumulate incoming spikes over time, which we directly use as logits in a cross-entropy loss function. We can use the associative operator \textit{Combine} to implement a Leaky Integrator (LI) neuron, which follows the same dynamics as the LIF neuron, except it does not produce output spikes. Logits are computed via the following integral
\begin{align}
    \int_{t = 0}^{t=\tau_{\text{max}}}e^{-t/\tau_{\text{LI}}} V(t) dt.
\end{align}
where $\tau_{\text{max}}$ is some prespecified maximum and $\tau_{\text{LI}}$ is the exponential time constant. The intuition behind this variant is that spikes that arrive earlier at the output neurons are weighted more heavily than spikes that arrive later. We provide complete closed-form equations for computing this integral in an \emph{event-based} manner in the Appendix. Furthermore, complete training details, including hyperparameters, spike-count regularizer specification, dataset preprocessing, and other implementation details are provided in the Appendix.

\section{Experiments}
\label{sec:experiments}
\begin{table*}
    \caption{Classification accuracy comparison on event-based datasets. Model architectures are defined by number of layers and connectivity pattern (F for feedforward, R for recurrent), number of hidden units (H), and use of delays (D) (e.g., 2F512HD means 2 layers, feedforward connectivity, 512 hidden units, and use of delays). $N$ is the number of spikes processed by a single neuron and $T$ is the number of time steps in the simulation. Our metrics are averaged over 5 random seeds and standard deviation is reported in parentheses.}
    \label{tab:accuracy_comparison}
\resizebox{\textwidth}{!}{
\begin{tabular}{cccccccc}
\toprule
Dataset & Model & \thead{Exact\\Gradients} & \thead{Continuous\\Spike Times} & Parallelized & \thead{Compute\\Depth} & Memory & Accuracy \\
\midrule
\multirow[c]{3}{*}{MNIST} & 1F350H, $\tau_{m} = 2\tau_{s}$ \citep{goltz2021} & \checkmark & \checkmark & x & $O(N)$ & $O(N)$ & 97.20 (0.10) \\
 & 1F350H \citep{wunderlich2021} & \checkmark & \checkmark & x & $O(N)$ & $O(N)$ & 97.60 (0.10) \\
 & 1F350H (Ours) & \checkmark & \checkmark & \checkmark & $O(C\log{K})$ & $O(N)$ & 98.04 (0.07) \\
\cline{1-8}
\multirow[c]{4}{*}{SHD} & 2F256HD \citep{hammouamri2024} & x & x & x & $O(T)$ & $O(T)$ & 95.07 (0.24) \\
 & 2F512HD \citep{meszaros2025} & \checkmark & x & x & $O(T)$ & $O(N)$ & 93.10 \\
 & 2F256HD (Ours) & \checkmark & \checkmark & \checkmark & $O(C \log{K})$ & $O(N)$ & 94.77 (0.46) \\
 & 2F512HD (Ours) & \checkmark & \checkmark & \checkmark & $O(C \log{K})$ & $O(N)$ & 94.96 (0.23) \\
\cline{1-8}
\multirow[c]{3}{*}{SSC} & 2F512HD \citep{hammouamri2024} & x & x & x & $O(T)$ & $O(T)$ & 80.69 (0.21) \\
 & 2F512HD \citep{meszaros2025} & \checkmark & x & x & $O(T)$ & $O(N)$ & 76.10 (1.00) \\
 & 2F512HD (Ours) & \checkmark & \checkmark & \checkmark & $O(C \log{K})$ & $O(N)$ & 77.79 (0.19) \\
\bottomrule
\end{tabular}
}
\end{table*}

Our experiments probe several research questions: 1) How much speedup can be achieved by processing input spikes in parallel using associative scans while maintaining exact hard-reset dynamics? 2) Can numerical spike time solvers such as Newton-Raphson be used to train SNNs on event-based datasets? 3) How does discretizing spike times into time bins impact classification accuracy on a task requiring sub-millisecond precision?

Our experiments are performed on four event-based datasets, including the Spiking Heidelberg Digits (SHD) dataset \citep{cramer2022}, the Spiking Speech Commands (SSC) dataset \citep{cramer2022}, the latency-encoded MNIST dataset, and the Yin-Yang (YY) dataset \citep{kriener2022}. Our model architecture is composed of fully connected layers of LIF neurons. The output layer is composed of $N_{\text{cls}}$ leaky integrator neurons, where $N_{\text{cls}}$ is the number of classes in the dataset. Complete training details are provided in the Appendix. We compare our method against a sequential event-based SNN implementation with exact hard-reset dynamics, which consumes input spikes one-by-one and after each spike checks if the neuron will produce an output spike before the next input spike arrives. If so, the neuron produces the output spike and resets its membrane potential to zero before continuing to consume input spikes. We report training speed compared to this serial baseline and explore how the chunk size (i.e., the number of input spikes processed in parallel) impacts training speed. We also compare our method's properties and classification accuracy to those of prior works that use discrete-time approximations, analytical spike time solutions, and surrogate gradients.

\subsection{Parallel Associative Scan Speedups \& Optimal Chunk Size}
\label{sec:speedups}
We measure the training time per batch of our parallel SNN implementation against a sequential/serial event-based SNN implementation with exact hard-reset dynamics. We vary the batch size, number of hidden units, and chunk size (i.e., the number of input spikes processed in parallel) to understand how these parameters impact speedup. Here, we report results on the SHD dataset and report SSC results in Appendix Section \ref{sec:speedups_appendix}.

In Figure \ref{fig:speedup}, we report that our parallel SNN implementation achieves up to 44x speedup over the sequential baseline. At larger batch sizes and larger hidden dimensions, we still observe large speedups, particularly in absolute terms, where sequential training slows down dramatically. We also find that chunk size plays a significant role in maximizing throughput as seen in Figure \ref{fig:chunk_size}. We observe that the optimal chunk size is determined by a combination of the number of hidden units and batch size. Larger chunk sizes allow for more parallelism, but also require moving more data---taxing memory bandwidth---and scheduling more work, which can lead to diminishing returns.
We find that chunk size 128 works well across a variety of batch sizes and hidden dimensions. Larger chunk sizes may also lead to more wasted computation, since more work is discarded when a neuron spikes early in the chunk. However, we note that this effect is manageable in practice since neurons tend to spike sparsely relative to the number of input spikes they receive. We quantify the percentage of work retained in Appendix Section \ref{sec:work_analysis}.

\subsection{Classification Accuracy}
\label{sec:accuracy}
Table \ref{tab:accuracy_comparison} summarizes key attributes of different methods and their classification accuracy on event-based datasets.

\paragraph{MNIST.}
We note the favorable computational scaling properties of our method, which has compute depth $O(C\log{K})$ due to our use of parallel associative scans, compared to either $O(N)$ or $O(T)$ for other methods, where $T$ is the number of discrete time steps in the simulation. We also note an improved classification accuracy of 98.04\% for the same parameter count compared to prior works that use exact gradients with analytical spike time solutions where the membrane and synaptic time constants are restricted to have the relation $\tau_{m} = 2\tau_{s}$ \citep{goltz2021}.

\paragraph{SHD and SSC.} We compare against methods that use exact gradients with discrete spike times \citep{meszaros2025} and surrogate gradients with a discrete-time grid \citep{hammouamri2024} (see Figure \ref{fig:time_vs_weight} for a visualization of how discretization affects the smoothness of the optimization landscape). Compared to \citet{meszaros2025}, we find that our method performs favorably on both the SHD and SSC datasets in terms of classification accuracy, confirming the feasibility of using numerical spike time solvers in SNN training and the potential advantages of continuous-time methods.\footnote{Best reported accuracy from \citet{meszaros2025} on SHD is 93.24 (1.0), though we benchmark comparable architectures.} We also note that our method not only operates over continuous spike times, but also operates over continuous delays, whereas \citet{meszaros2025} use discrete delays. Notably, our system supports continuous delays \emph{and} multiple spikes per neuron---an important distinction from other recent event-based continuous-time SNNs with delays \citep{goltz2025} (see Appendix Figure \ref{fig:work_retention_ssc} for observed per-layer spike counts during training).

Our method's accuracy is comparable to that of \citet{hammouamri2024} on the SHD dataset and lower on the SSC dataset. \citet{hammouamri2024}'s use of surrogate gradients leads to a loss of machine-precision spike times, $O(T)$ compute/memory (vs. our $O(C\log K)$ compute, $O(N)$ memory), and binned input spike trains. We use full spike trains with no loss of temporal resolution. We note that current standard benchmarks (SHD, SSC) are solvable via rate-coding, which favors the smoothing properties of surrogate gradients \citep{ma2025}. While benchmarks requiring strict spike-timing codes are emerging, to our knowledge there is not yet a widely adopted community-standard large-scale benchmark whose performance robustly depends on such codes. Our method provides the necessary approach to train on such future datasets efficiently.

\subsection{Impact of Discretization on Temporal Coding Tasks}
\label{sec:yinyang_experiment}
\begin{figure}
    \centering
    \includegraphics{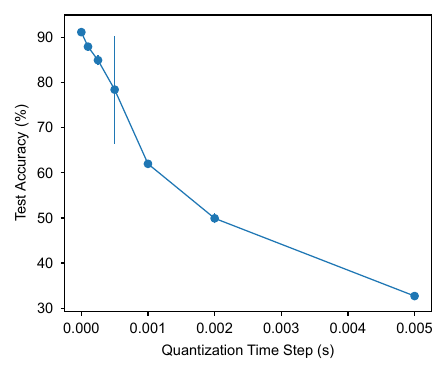}
    \caption{Impact of temporal discretization on Yin-Yang classification. We trained our method under different discrete-time constraints by quantizing spike times to multiples of $\Delta t$ during training and testing. Accuracy degrades significantly as $\Delta t$ increases, dropping to near-chance ($33\%$) at $\Delta t \geq 2$ms. Continuous spike times ($\Delta t \to 0$, our standard method) maintain the highest accuracy, demonstrating that discrete-time methods would require $\Delta t \ll 1$ms---incurring substantial computational costs---to preserve temporal information.}
    \label{fig:yinyang_results}
\end{figure}
To demonstrate the limitations of discrete-time methods on temporal coding tasks, we created a barn owl-inspired interaural time difference (ITD) task using the Yin-Yang dataset \citep{kriener2022}. Each sample consists of 5 input spikes arriving within 2ms, where spike times encode the $(x,y)$ position on the Yin-Yang symbol. The 3-way classification (yin, yang, dot) requires sub-millisecond discrimination---analogous to barn owl sound localization \citep{carr1990}. We trained our method under different discrete-time constraints to simulate what discrete-time methods would experience. For each time step size $\Delta t$, we quantized all spike times to the nearest multiple of $\Delta t$ during training and testing. Figure \ref{fig:yinyang_results} shows that accuracy degrades significantly 
as $\Delta t$ increases, dropping to near-chance levels ($33\%$) at $\Delta t \geq 2$ms. Our method with continuous spike times maintains the highest accuracy. This demonstrates that achieving sub-millisecond precision (e.g., 
$\Delta t = 0.1$ms) would require discrete-time methods to incur 10x higher memory and computation costs compared to 1ms bins, whereas our continuous-time approach achieves this precision without such penalties.

\section{Discussion}
\label{sec:discussion}
This work opens up several avenues for future research. One potential direction is to explore the application of our method to more complex SNN architectures, such as convolutional or recurrent networks. Recurrence in particular poses interesting challenges for parallel processing on modern hardware, such as GPUs, because the input queues to each neuron are dynamic. While data structures such as heaps/priority queues are sensible choices to consider, they are very indexing- and memory-access-intensive, which may make them ill-suited for GPU architectures \citep{engelken2023a,landsmeer2025}.
Additionally, now that we have demonstrated the effectiveness of using numerical spike time solvers in SNN training, we can explore a wider range of neuron models, which may lead to improved performance on event-based tasks. Finally, evaluating the impact of training with precise spike times on downstream deployment to temporally precise neuromorphic hardware is an exciting direction and is left for future work.

This paper presents a novel combination of methods that unlocks flexible and efficient training of temporally precise spiking neural networks on modern parallel hardware, specifically via parallel associative scans---while maintaining hard-reset dynamics---and differentiable spike time solvers. We demonstrate significant speedups in training time over sequential event-based SNN approaches. We also demonstrate that our numerical spike time solver is capable of machine-precision spike times, which enables investigations into how the brain might leverage spike timing and ordering for computation, compatibility with high resolution sensors and neuromorphic hardware, and computing on an event-by-event basis, where memory and compute scale with the number of spikes processed rather than the number of discrete time steps simulated. Furthermore, we enable continuous spike times while removing the restriction that the neuron model must have an analytical spike time solution. Our method achieves strong classification accuracy on event-based datasets compared to prior works that use discrete-time approximations, indicating the viability of using numerical spike time solvers in SNN training.

\section*{Software and Data}
Our codebase is available at \url{https://github.com/ToddMorrill/snn-bullet-trains}. The datasets used in this work are publicly available at the following URLs: SHD and SSC datasets \url{https://tonic.readthedocs.io/en/latest/datasets.html}, and MNIST dataset \url{https://docs.pytorch.org/vision/main/generated/torchvision.datasets.MNIST.html}.

\section*{Acknowledgements}
We thank Richard Zemel and Benjamin Eyre for helpful discussions and feedback on the manuscript. This work was supported by funds provided by the National Science Foundation and by DoD OUSD (R\&E) under Cooperative Agreement PHY-2229929 (The NSF AI Institute for Artificial and Natural Intelligence), the US National Institutes of Health Grant S10OD028632-01, and the Schmidt Foundation.

\section*{Impact Statement}

This paper presents work whose goal is to advance the field of Machine
Learning. There are many potential societal consequences of our work, none of which we feel must be specifically highlighted here.

\newpage
\bibliography{bib}
\bibliographystyle{icml2026}

\newpage
\appendix
\onecolumn
\section{Solving the LIF Neuron ODE System}
\label{sec:lif_ode_solution}

Here we solve the LIF neuron ordinary differential equation (ODE) system, which provides closed-form expressions for $V(t)$ and $I(t)$, which are useful in later calculations. We are given
\begin{align}
    \tau_m \frac{dV}{dt} &= -V(t) + I(t),\quad V(0) = V_0 \label{eq:mem_dyn} \\
    \tau_s \frac{dI}{dt} &= -I(t),\quad I(0) = I_0 \label{eq:curr_dyn}
\end{align}
where $V_0$ and $I_0$ are the initial membrane potential and synaptic input current at time $t=0$.
We can solve equation \ref{eq:curr_dyn} through a separation of variables. We have
\begin{align}
    -\tau_s \int_{I(t'=0)}^{I(t'=t)}\frac{1}{I(t')}dI &= \int_{t'=0}^{t'=t}dt' \\
    \log(I(t'))\biggr|_{t'=0}^{t'=t} &= -\frac{t}{\tau_s} \\
    \log\left(\frac{I(t)}{I_0}\right) &= -\frac{t}{\tau_s} \\
    I(t) &= I_0 e^{-t/\tau_s}. \label{eq:curr_appendix}
\end{align}
Next, we solve for $V(t)$ through the use of an integration factor $e^{t/\tau_{m}}$. We have
\begin{align}
    \frac{dV}{dt} e^{t/\tau_m} &= \frac{1}{\tau_m} I_0 e^{-t/\tau_s} e^{t/\tau_m} - \frac{1}{\tau_m}V(t)e^{t/\tau_m}  & \text{(sub. in eq. \ref{eq:curr_appendix})} \\
    \frac{d}{dt}\left( V(t) e^{t/\tau_m} \right) &=  \frac{1}{\tau_m} I_0 e^{-t/\tau_s} e^{t/\tau_m} \label{eq:chain_rule}
\end{align}
where $\frac{d}{dt}\left( V(t) e^{t/\tau_m} \right) = \frac{dV}{dt}e^{t/\tau_m} + \frac{V(t)e^{t/\tau_m}}{\tau_m}$ is used in equation \ref{eq:chain_rule} and holds by the chain rule.

Let $A = \frac{1}{\tau_s} - \frac{1}{\tau_m}$. Integrating both sides, we have
\begin{align}
    \int_{t'=0}^{t'=t}\frac{d}{dt'}\left( V(t') e^{t'/\tau_m} \right)dt' &=  \frac{I_0}{\tau_m} \int_{t'=0}^{t'=t} e^{-t'A} dt' \\
    V(t)e^{t/\tau_m} - V(0) &= -\frac{I_0}{\tau_m A} e^{-tA} + \frac{I_0}{\tau_m A} \\
    V(t)e^{t/\tau_m} - V(0) &= I_0\frac{\tau_{s}}{\tau_m - \tau_s}[1 - e^{-(t/\tau_s - t/\tau_m)}].
\end{align}
Finally, multiplying by $e^{-t/\tau_m}$, the inverse of the integration factor, we have
\begin{align}
    V(t) &= V_0 e^{-t/\tau_m} + I_0\frac{\tau_{s}}{\tau_m - \tau_s}[e^{-t/\tau_m} - e^{-t/\tau_s}]. \label{eq:mem_pot_appendix}
\end{align}
Note that we assume $\tau_m \neq \tau_s$ to avoid division by 0. For the $\tau_m = \tau_s$ case, the system can be solved analytically (see \citet{goltz2021} for details), though our associative scan method still applies. The complete solution to the system is given as
\begin{gather}
    V(t) = V_0 e^{-t/\tau_m} + I_0\frac{\tau_{s}}{\tau_m - \tau_s}[e^{-t/\tau_m} - e^{-t/\tau_s}] \label{eq:mem_pot} \\
    I(t) = I_0 e^{-t/\tau_s}. \label{eq:curr}
\end{gather}

\section{Associative Scan}
\label{sec:associative_scan}
Recall that we propose a form for the associative \textit{Combine} operator as
\begin{align}
    \textit{Combine}((M_2, \mathbf{b}_2), (M_1, \mathbf{b}_1)) = (M_2 M_1, M_2 \mathbf{b}_1 + \mathbf{b}_2).
\end{align}
\begin{lemma}
    Let $\mathbf{s}_1 = M_1 \mathbf{s}_0 + \mathbf{b}_1$ and $\mathbf{s}_2 = M_2 \mathbf{s}_1 + \mathbf{b}_2$. Then the operator \textnormal{Combine} defined as $\textit{Combine}((M_2, \mathbf{b}_2), (M_1, \mathbf{b}_1)) = (M_2 M_1, M_2 \mathbf{b}_1 + \mathbf{b}_2)$ is associative, i.e., $\textit{Combine}(\textit{Combine}(a, b), c) = \textit{Combine}(a, \textit{Combine}(b, c))$ for any $a, b, c$.
\end{lemma}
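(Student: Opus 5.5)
The plan is a direct algebraic verification: write $a = (M_a, \mathbf{b}_a)$, $b = (M_b, \mathbf{b}_b)$, $c = (M_c, \mathbf{b}_c)$, where each $M$ is a $2\times 2$ matrix (or, more generally, any square matrix of matching dimension) and each $\mathbf{b}$ a vector. We then expand both sides of the claimed identity using the definition of \textit{Combine} and compare component by component.

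For the left side, first compute $\textit{Combine}(a,b) = (M_a M_b, M_a \mathbf{b}_b + \mathbf{b}_a)$. Combining this with $c$ gives
\begin{align*}
\textit{Combine}(\textit{Combine}(a,b),c) = \bigl(M_a M_b M_c,\; M_a M_b \mathbf{b}_c + M_a \mathbf{b}_b + \mathbf{b}_a\bigr).
\end{align*}
For the right side, first compute $\textit{Combine}(b,c) = (M_b M_c, M_b \mathbf{b}_c + \mathbf{b}_b)$. Combining $a$ with this gives
\begin{align*}
\textit{Combine}(a,\textit{Combine}(b,c)) = \bigl(M_a M_b M_c,\; M_a(M_b \mathbf{b}_c + \mathbf{b}_b) + \mathbf{b}_a\bigr).
\end{align*}
The matrix components agree by associativity of matrix multiplication. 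The vector components agree by distributivity of matrix--vector multiplication over addition, together with $(M_a M_b)\mathbf{b}_c = M_a(M_b \mathbf{b}_c)$.

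As a conceptual cross-check, one can also argue that the pair $(M,\mathbf{b})$ represents the affine map $\mathbf{s}\mapsto M\mathbf{s}+\mathbf{b}$. Under this correspondence \textit{Combine}$(f_2,f_1)$ is the composition $f_2\circ f_1$, as derived in the main text for $\mathbf{s}_2$ in terms of $\mathbf{s}_0$. Function composition is associative, so \textit{Combine} is too. Alternatively, embed $(M,\mathbf{b})$ as the block matrix $\begin{bmatrix} M & \mathbf{b}\\ 0 & 1\end{bmatrix}$; \textit{Combine} then becomes ordinary matrix multiplication.

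There is no real obstacle. The only care needed is to keep the argument order consistent with the convention that the left argument is applied after the right one, so that the products appear as $M_aM_bM_c$ on both sides.
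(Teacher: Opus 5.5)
Your proposal is correct and matches the paper's proof: expand both sides with the definition of \textit{Combine}, and each reduces to $(M_aM_bM_c,\; M_aM_b\mathbf{b}_c + M_a\mathbf{b}_b + \mathbf{b}_a)$. Your cross-checks via composition of affine maps and the block-matrix embedding are also valid, but the paper does not use them and the direct expansion already suffices.
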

\begin{proof}
We want to show that $\textit{Combine}(\textit{Combine}(a, b), c) = \textit{Combine}(a, \textit{Combine}(b, c))$. Expanding the left-hand-side, we have
\begin{align}
    \textit{Combine}((M_a M_b, M_a \mathbf{b}_b + \mathbf{b}_a), (M_c, \mathbf{b}_c)) = (M_a M_b M_c, M_a M_b \mathbf{b}_c + M_a \mathbf{b}_b + \mathbf{b}_a)
\end{align}
and expanding the right-hand-side, we have
\begin{align}
    \textit{Combine}((M_a, \mathbf{b}_a), (M_b M_c, M_b \mathbf{b}_c + \mathbf{b}_b)) = (M_a M_b M_c, M_a M_b \mathbf{b}_c + M_a \mathbf{b}_b + \mathbf{b}_a),
\end{align}

which proves their equivalence and allows us to conclude that \textit{Combine} is an associative operator.
\end{proof}

We note that our use of parallel associative scans is facilitated by the ability to express the inter-spike dynamics as affine maps. This holds for any neuron model with linear dynamics. Specifically, any model where the state evolution between spikes follows
\begin{align}
    \frac{d\mathbf{s}}{dt} = A \mathbf{s}(t) + \mathbf{c}(t)
\end{align}
where $A$ is a constant matrix and $\mathbf{c}(t)$ is a state-independent vector, admits a closed-form transition operator that is independent of the initial state $\mathbf{s}(0)$. This allows the temporal evolution to be parallelized via associativity.

\section{Event-Based Leaky Integrator}
\label{sec:leaky_integrator}
We now derive event-based implementations of the unweighted and weighted leaky integrator (LI) neuron models \citep{nowotny2025}, which is used in the final layer of our SNN to translate output spike trains into class logits. Recall that a LI neuron computes the integral
\begin{align}
    \int_{t=0}^{t=\tau_{\text{max}}} V(t) dt
\end{align}
where $t=0$ is the start of the integration period and $\tau_{\text{max}}$ is some prespecified maximum (e.g., 2 seconds). We break the integral into intervals between consecutive spikes and make use of the \textit{Combine} operation to obtain the voltage value at the left and right endpoints of the integrals
\begin{align}
    \sum_{i=0}^{N-1}\int_{t_{i}}^{t_{i+1}} V(t) dt
\end{align}
where $i$ indexes the input spikes. Expanding the integral between spike times $t_i$ and $t_{i+1}$, we have
\begin{align}
    \int_{t_{i}}^{t_{i+1}} V(t) dt &= \int_{t_{i}}^{t_{i+1}} V_{t_i} e^{-(t-t_i)/\tau_m} + I_{t_i}\frac{\tau_{s}}{\tau_m - \tau_s}[e^{-(t-t_i)/\tau_m} - e^{-(t-t_i)/\tau_s}] dt \\
    &= V_{t_i}\int_{t_{i}}^{t_{i+1}} e^{-(t-t_i)/\tau_m} dt \\
    &\quad+ I_{t_i}\frac{\tau_{s}}{\tau_m - \tau_s}[\int_{t_{i}}^{t_{i+1}}e^{-(t-t_i)/\tau_m} dt - \int_{t_{i}}^{t_{i+1}} e^{-(t-t_i)/\tau_s} dt] \\
    &= V_{t_i} \tau_m (1 - e^{-\Delta t/\tau_m}) + I_{t_i}\frac{\tau_{s}}{\tau_m - \tau_s}[\tau_m (1 - e^{-\Delta t/\tau_m}) - \tau_s (1 - e^{-\Delta t/\tau_s})], \label{eq:leaky_integrator}
\end{align}
where we've denoted the time between spikes as $\Delta t = t_{i+1} - t_i$. A successful variant of the LI neuron---and the one we use in this work---is the exponentially weighted LI neuron \citep{nowotny2025}, which is given as
\begin{align}
    \int_{t = 0}^{t=\tau_{\text{max}}}e^{-t/\tau_{\text{LI}}} V(t) dt, \label{eq:weighted_leaky_integrator_full}
\end{align}
where the analogous closed-form spike-to-spike voltage integral (cf. equation \ref{eq:leaky_integrator}) is
\begin{align}
    \int_{t_{i}}^{t_{i+1}} e^{-t/\tau_{\text{LI}}}V(t) dt &= (V_{t_i} + \gamma)e^{t_i/\tau_m}T_m (e^{-t_i/T_m} - e^{-t_{i+1}/T_m}) \\
    &\quad- \gamma e^{t_i/\tau_s}T_s (e^{-t_i/T_s} - e^{-t_{i+1}/T_s}) \label{eq:weighted_leaky_integrator}
\end{align}
where $T_m = \frac{\tau_{\text{LI}} \tau_m}{\tau_{\text{LI}} + \tau_m}$ and $T_s = \frac{\tau_{\text{LI}} \tau_s}{\tau_{\text{LI}} + \tau_s}$ and $\gamma=I_{t_i}\frac{\tau_{s}}{\tau_m - \tau_s}$.

We found that using the weighted leaky integrator directly resulted in outputs with large absolute values, which caused saturation in the softmax operation used in the cross-entropy loss. We explored four solutions to this problem: 1) applying a fixed logit temperature multiplier before the softmax, 2) normalizing the output logits to have zero mean and unit variance (Layer Normalization), 3) normalizing the output logits by the unweighted leaky integrator output (i.e., dividing by Equation \ref{eq:leaky_integrator}), and 4) using AdamW \citep{loshchilov2018} optimizer with weight decay on the output layer weights. We found that applying a fixed logit temperature multiplier worked best in practice and used this method in all experiments.

\section{Numerical Solvers}
\label{sec:numerical_solver}
We now provide additional details on our numerical spike time solvers. 

\subsection{Newton-Raphson Spike Time Solver}
\label{sec:newton_raphson_solver}
The Newton-Raphson update rule for the predicted spike time is given by
\begin{align}
    t_{m+1} = t_m - \frac{R(\mathbf{p}, t_m)}{\frac{\partial R}{\partial t}(\mathbf{p}, t_m)}
\end{align}
where $m$ indexes the procedure's iteration. Given a sufficiently close initial guess, Newton-Raphson solvers converge quadratically (i.e., the distance between the current guess and the true spike time, denoted $\epsilon$, has the relation $\epsilon_{m+1} = O(\epsilon_m^2)$) \citep{press2007}. We initialize this iterative procedure by setting $t_0 = \frac{t_{V_{\max}} - t_{\text{now}}}{2}$, or in other words the midpoint between the current time ($t_{\text{now}} = 0$) and the time that the neuron reaches maximum voltage---note these are relative times, not absolute times. $t_{V_{\max}}$ can be computed analytically, which we describe below.

We now show how $t_{V_{\max}}$ can be computed. A maximum occurs when the condition $\frac{dV}{dt} = 0$ is satisfied, which implies $V(t_{V_{\max}}) = I(t_{V_{\max}})$ by equation \ref{eq:mem_dyn}, which we solve for $t_{V_{\max}}$ to get
\begin{align}
    t_{V_{\max}} = \frac{\tau_m \tau_s}{\tau_m - \tau_s} \log\left( \frac{I_0 \tau_m}{V_0 (\tau_m - \tau_s) + I_0 \tau_s} \right). \label{eq:t_vmax}
\end{align}
We work with neuron models where $\tau_m > \tau_s$, so the maximum occurs at a positive time when the argument to the $\log$ function is greater than 1, or more specifically, when 
\begin{align}
    \frac{I_0 \tau_m}{V_0 (\tau_m - \tau_s) + I_0 \tau_s} > 1 \implies I_0 > V_0.
\end{align}
We defined unit tests to verify the correctness and numerical precision of our Newton-Raphson spike time solver against analytically computed spike times using the neuron model specified by \citet{goltz2021} with $\tau_{m} = 2\tau_{s}$ ($\tau_{m}=20\text{ms}$ and $\tau_{s}=10\text{ms}$). We vary the synaptic weight from very small values that cause late spikes to very large values that cause early spikes and find that our implementation recovers ground truth spikes to within $10^{-7}$ seconds of the analytical solution with 14 iterations, even in the worst-case ``just grazing'' scenarios where the neuron barely reaches threshold.

A natural question is: what happens if Newton-Raphson fails to converge? In our setting, it does not. We 1) only invoke the solver on intervals that our analytic checks certify contain a spike, and 2) initialize in a conservative region where the voltage evolution is well-behaved, and 3) clamp the output time to the search interval on each iteration. Concretely, for each candidate interval $[t_{\text{now}}, t_{\text{next}}]$ we use the analytic condition $V(t_{V_{\max}}) \ge V_{\text{th}}$ (with $t_{V_{\max}} \in [t_{\text{now}}, t_{\text{next}})$) or $V(t_{\text{next}}) \ge V_{\text{th}}$ to guarantee existence of a threshold crossing. We then initialize the Newton iteration at the midpoint between the left endpoint ($t_{\text{now}} = 0$) and the maximum-voltage time,
\begin{align}
    t_0 = \frac{t_{V_{\max}} - t_{\text{now}}}{2},
\end{align}
which lies inside an interval where the membrane potential is monotone increasing. For our current-based LIF model (Equation \ref{eq:mem_pot_appendix}) the membrane potential $V(t)$ is a linear combination of two distinct exponentials, $e^{-t/\tau_m}$ and $e^{-t/\tau_s}$. Consequently, its time derivative $\frac{dV}{dt}$ also takes the form of a sum of two exponentials:
\begin{align}
    \frac{dV}{dt} = c_1 e^{-t/\tau_m} + c_2 e^{-t/\tau_s}
\end{align}
Setting $\frac{dV}{dt} = 0$ to find critical points yields the equation:
\begin{align}
e^{t (\frac{1}{\tau_s} - \frac{1}{\tau_m})} = -\frac{c_2}{c_1}
\end{align}
Since the exponential function on the left-hand side is strictly monotonic (given $\tau_m \neq \tau_s$), this equation has at most one solution for $t$. Therefore, $\frac{dV}{dt}$ changes sign at most once. Given that $V(t)$ decays to 0 as $t \to \infty$, if a peak exists at $t_{V_{\max}} > t_{\text{now}}$, the function must be strictly increasing on $[t_{\text{now}}, t_{V_{\max}}]$ and strictly decreasing thereafter. This unimodality guarantees that the residual $R(t) = V(t) - V_{\text{th}}$ has a unique root within the search interval, ensuring non-oscillatory convergence of the Newton solver.

Operationally, we run a fixed 14 Newton iterations, which our tests show suffices to reach float32 machine-precision over the entire weight range (including the grazing case). Our method operates at machine-precision for single-precision floating point numbers (i.e., float32), which follows from the fact that our implementation must compute $V(t) - V_{\text{th}}$. In our work $V_{\text{th}} = 1.0$, and at this scale, the next representable float32 number is 1.0000001192092896. So, for example, once the numerical root solvers find $t$ such that $V(t)$ is equal to this number, further iterations will return the same spike time. Therefore, our method is able to recover spike times to machine-precision permitted by the root finding procedure in float32. We also note that the term $-\left( \frac{\partial R}{\partial t}\right)^{-1}$ in Equation \ref{eq:ift_vjp} can result in spike time divergence if the derivative is very close to zero. To prevent this issue, we employ a Solver Regularization term---consistent with other exact gradient methods that address diverging spike times \citep{mostafa2018,goltz2021}---specified in Table \ref{tab:hyperparams} that limits the minimum absolute value of the derivative.

\subsection{Bisection Solver}
Here, we define the Bisection solver's algorithm. The Bisection method requires 20 iterations to achieve similar numerical precision as the Newton-Raphson method in our tests.

\begin{algorithm}[H]
  \caption{Bisection Method for Spike Time}
  \label{alg:bisection}
  \begin{algorithmic}
    \STATE {\bfseries Input:} LIF parameters $\theta$, initial state $\mathbf{s}_0 = (V_0, I_0)$, time of maximum voltage $t_{V_{\max}}$, maximum iterations $max\_iter$
    \STATE Define function $R(t) = V(t) - V_{\text{th}}$ where $V(t)$ is given by equation \ref{eq:mem_pot_appendix}
    \STATE Initialize $t_{low} = 0.0$, $t_{high} = t_{V_{\max}}$
    \FOR{$i=1$ {\bfseries to} $max\_iter$}
        \STATE $t_{mid} = \frac{t_{low} + t_{high}}{2.0}$
        \IF{$R(t_{mid}) > 0$}
            \STATE $t_{high} = t_{mid}$
        \ELSE
            \STATE $t_{low} = t_{mid}$
        \ENDIF
    \ENDFOR
    \STATE {\bfseries Output:} Estimated spike time $t_{solution} = \frac{t_{low} + t_{high}}{2.0}$
  \end{algorithmic}
\end{algorithm}

Note that the Bisection method requires that you can bracket the root, i.e., find an interval $[t_{low}, t_{high}]$ such that $R(t_{low})$ and $R(t_{high})$ have opposite signs. We are able to do this by setting $t_{low} = 0$ and $t_{high} = t_{V_{\max}}$. Again, recall that we can evaluate $V(t_{V_{\max}})$ to determine if it is greater than $V_{\text{th}}$, in which case we satisfy the requirement of a valid bracketing that $R(t_{low})$ and $R(t_{high})$ have opposite signs. The Bisection solver has a slower convergence rate compared to the Newton-Raphson method with $\epsilon_{m+1} = \frac{1}{2}\epsilon_m$. In contrast to the Newton-Raphson method, the Bisection method does not require knowledge of the derivative of $R(t)$. Both methods still require a procedure for evaluating $V(t)$ given the initial state $\mathbf{s}_0$ and time $t$.

\subsection{Implicit Function Theorem Gradients}
\label{sec:ift_gradients}
Recall that our aim is to compute the gradient of the output spike time $t^{\star}$ with respect to the parameters $\mathbf{p}$ that govern the neuron's dynamics. The spike time is implicitly defined as the root of the function
\begin{align}
    R(\mathbf{p}, t(\mathbf{p})) = V(\mathbf{p}, t(\mathbf{p})) - V_{\text{th}}.
\end{align}
In order for $t(\mathbf{p})$ to be well-defined as a function locally in a neighborhood around the point $(\mathbf{p}, t^{\star})$, it must be the case that $\partial R / \partial t |_{(\mathbf{p}, t^\star)} \neq 0$. If this condition is satisfied, then we can solve for the gradient of the output spike time with respect to the parameters as follows.
Since $R(\mathbf{p}, t(\mathbf{p})) = 0$ holds for all $\mathbf{p}$ in a valid neighborhood, its gradient with respect to $\mathbf{p}$ must also be zero. Applying the multivariate chain rule gives
\begin{align}
    \begin{split}
    \nabla_{\mathbf{p}} R(\mathbf{p}, t(\mathbf{p})) &= \frac{\partial R}{\partial \mathbf{p}} + \frac{\partial R}{\partial t}\frac{\partial t}{\partial \mathbf{p}} = 0 \\
    \frac{\partial t}{\partial \mathbf{p}} &= -\left( \frac{\partial R}{\partial t}\right)^{-1} \cdot \frac{\partial R}{\partial \mathbf{p}}. \label{eq:ift_vjp}
    \end{split}
\end{align}
Recall that $\frac{\partial R}{\partial t} = \frac{\partial V}{\partial t}$. This highlights a boundary case when $\frac{\partial V}{\partial t} = 0$, which is when the spike time diverges to infinity. Also note that equation \ref{eq:ift_vjp} defines a Vector-Jacobian Product (VJP) rule in \texttt{JAX} for any root solver.
\section{Implementation Details}
We now describe key implementation details of our parallel SNN implementation. Our models are feedforward fully connected SNNs composed of LIF neurons with hard-reset dynamics governed by the ODE system defined in equations \ref{eq:mem_dyn} and \ref{eq:curr_dyn}. Our loss function has two components: 1) a classification loss, and 2) a spike-count regularizer. The classification loss is a cross-entropy loss on the output logits produced by the weighted leaky integrator output as defined in equation \ref{eq:weighted_leaky_integrator_full} of Appendix Section \ref{sec:leaky_integrator}. The spike-count regularizer pulls spiking activity towards a target spike count. In particular, our complete loss function is given as
\begin{align}
    \mathcal{L} = \mathcal{L}_{CE}(\hat{y}, y) + \lambda_{\text{reg}} \mathcal{L}_{\text{reg}} + \lambda_{\text{over-reg}} \mathcal{L}_{\text{over-reg}} \label{eq:loss}
\end{align}
where $\mathcal{L}_{CE}$ is the cross-entropy loss between the predicted class logits $\hat{y}$ and the ground truth class $y$, $\mathcal{L}_{\text{reg}}$ is the spike-count regularization loss (specified below), $\lambda_{\text{reg}}$ is a hyperparameter that controls the strength of the regularization, and $\lambda_{\text{over-reg}}$ and $\mathcal{L}_{\text{over-reg}}$ control overactive neurons (described below). We now describe the spike-count regularization terms in detail.

\subsection{Spike Count Regularization}
\label{sec:spike_count_regularization}
To encourage a biologically plausible firing activity, we introduce a regularization term to the loss function. As the spike count itself is non-differentiable, we optimize a differentiable proxy based on the membrane potential. The core idea is that if a neuron is over-active, we penalize the maximum voltage ($V_{\max}$) in the intervals where it spiked, pushing the voltage trajectory down. Conversely, if a neuron is under-active, we penalize the $V_{\max}$ in the intervals where it remained silent, pushing the trajectory up towards the firing threshold.

Let $c_{\text{target}}$ be the target spike count per trial. For a single neuron $i$ and a single data instance $k$ from a batch of size $N_{\text{batch}}$, let $c_i^{(k)}$ be its measured spike count. We consider the $M_i^{(k)}$ intervals defined by the input spike train for this instance. For each interval $j \in \{1, \dots, M_i^{(k)}\}$, we can analytically compute the maximum voltage potential, denoted $V_{\max}^{(i,j,k)}$.

We partition the intervals into two sets for each neuron and each instance:
\begin{itemize}
    \item $\mathcal{S}_i^{(k)} = \{j \mid \text{neuron } i \text{ spiked in interval } j \text{ on instance } k\}$
    \item $\bar{\mathcal{S}}_i^{(k)} = \{j \mid \text{neuron } i \text{ remained silent in interval } j \text{ on instance } k\}$
\end{itemize}
Note that the measured spike count for the instance is $c_i^{(k)} = |\mathcal{S}_i^{(k)}|$.

To regulate the firing, we first define two per-instance auxiliary loss terms. For the over-firing case, the loss penalizes any voltage maximum that exceeds the threshold $V_{\text{th}}$. For the under-firing case, the loss penalizes any voltage maximum that falls below the threshold.
\begin{align}
    \mathcal{L}_{\text{under}}^{(i,k)} &= \frac{1}{|\bar{\mathcal{S}}_i^{(k)}|} \sum_{j \in \bar{\mathcal{S}}_i^{(k)}} \text{ReLU}\left( V_{\text{th}} - V_{\max}^{(i,j,k)} \right) \\
    \mathcal{L}_{\text{over}}^{(i,k)} &= \frac{1}{|\mathcal{S}_i^{(k)}|} \sum_{j \in \mathcal{S}_i^{(k)}} \text{ReLU}\left( V_{\max}^{(i,j,k)} - V_{\text{th}} \right)
\end{align}
where $\text{ReLU}(x) = \max(0, x)$ and is short for Rectified Linear Unit (ReLU). We then average these per neuron loss terms over the batch, alongside the spike count:
\begin{align}
    \bar{c}_i &= \frac{1}{N_{\text{batch}}} \sum_{k=1}^{N_{\text{batch}}} c_i^{(k)} \\
    \bar{\mathcal{L}}_{\text{under}}^{(i)} &= \frac{1}{N_{\text{batch}}} \sum_{k=1}^{N_{\text{batch}}} \mathcal{L}_{\text{under}}^{(i,k)} \\
    \bar{\mathcal{L}}_{\text{over}}^{(i)} &= \frac{1}{N_{\text{batch}}} \sum_{k=1}^{N_{\text{batch}}} \mathcal{L}_{\text{over}}^{(i,k)}
\end{align}
The total regularization loss for neuron $i$ is a weighted sum of these batch-averaged terms, activated based on whether the neuron's average spike count is above or below the target.
\begin{align}
    \mathcal{L}_{\text{reg}}^{(i)} = \text{ReLU}\left(1 - \frac{\bar{c}_i}{c_{\text{target}}}\right) \bar{\mathcal{L}}_{\text{under}}^{(i)} + \text{ReLU}\left(\frac{\bar{c}_i}{c_{\text{target}}} - 1\right) \bar{\mathcal{L}}_{\text{over}}^{(i)}
\end{align}
The final regularization loss is the average over all $n$ neurons in the entire network: $\mathcal{L}_{\text{reg}} = \frac{1}{n}\sum_{i=1}^{n} \mathcal{L}_{\text{reg}}^{(i)}$.

\paragraph{Regularizing overactive neurons.} Even with the above regularizers, some neurons can become overactive outliers, which results in neurons that don't consume their full input queue and trigger the output spike cap described in Section \ref{sec:event_queues}. To mitigate this issue, we apply an additional regularizer that penalizes \emph{individual} neurons that exceed the target spike count, rather than just penalizing the average spike count across the batch. Concretely, we add a term to the loss that penalizes any neuron $i$ in instance $k$ whose spike count exceeds the target:
\begin{align}
    \mathcal{L}_{\text{over-reg-individual}}^{(i,k)} = \text{ReLU}\left(\frac{c_i^{(k)}}{c_{\text{target}}} - 1\right) \mathcal{L}_{\text{over}}^{(i,k)}.
\end{align}
We then average across neurons to get the final individual over-firing regularization term:
\begin{align}
    \mathcal{L}_{\text{over-reg}} = \frac{1}{n} \sum_{i=1}^{n} \sum_{k=1}^{N_{\text{batch}}} \mathcal{L}_{\text{over-reg-individual}}^{(i,k)}.
\end{align}
This term is added to the overall loss with its own hyperparameter weight $\lambda_{\text{over-reg}}$ that controls the strength of this regularization. We find that this additional term is effective at preventing outlier neurons that would trigger the output spike cap, which in turn allows us to consume the full input queue without needing to set a high cap that would increase runtimes.

\subsection{Event Queues}
\label{sec:event_queues}
Working with precise spike times deviates from the common practice of discretizing time into fixed steps and using a ring buffer to manage synaptic delays \citep{landsmeer2025}. Instead, we maintain sorted event queues for each neuron that store incoming spikes along with their arrival times. When a neuron processes its queue, it retrieves spikes in chronological order, ensuring accurate temporal dynamics. We process events layerwise, meaning all neurons in a layer process their queues, generate spikes, and then the resulting spikes are sorted into the queues of the next layer. This approach preserves the temporal order of spikes across layers and allows for our chunked parallel processing method to be applied effectively.

It is also important to elucidate how we handle spiking within chunks. Within a given chunk, we only ever ``commit'' a single output spike. After the parallel scan computes the per-interval analytic spike indicators (Section \ref{sec:spike_time_solvers}), we select the first interval whose indicator is \texttt{True} (implemented as an \texttt{argmax} over a boolean array), solve for the corresponding spike time, hard-reset the neuron, and discard any within-chunk work after that time. This is not an approximation---it matches the semantics of a serial event simulator, which also stops processing input events once a neuron fires and then resumes from the post-reset state. If a neuron spikes again immediately after reset, this is handled in the next iteration in exactly the same way---the new ``current time'' becomes the just-emitted spike time, and we again test for a spike between that time and the next pending input spike time. Thus, multiple spikes per neuron are represented as multiple iterations over chunks/intervals. Each chunk iteration advances the state to the first output spike (or completely consumes the chunk if no spike occurs), preserving perfect temporal fidelity.

Note that in order to compile the \texttt{jax.jit} model function, we must prespecify the number of chunks that each layer will process. We use the following heuristic for determining the number of chunks per layer. First, we observe the maximum number of input spikes the input layer will receive, which we denote as $N_{\text{input}}$. Next, we specify a desired chunk size $K$ and maximum number of output spikes permitted per neuron in each layer $S_{\max}^{(i)}$ for layer $i$. Then for the input layer, we set the number of chunks to $\lceil N_{\text{input}} / K \rceil +  S_{\max}^{(0)}$ so that each chunk processes at most $K$ input spikes plus an allowance for output spikes that may be generated within the chunk. For subsequent layers $i > 0$, we set the number of chunks to be $\lceil (S_{\max}^{(i-1)} \cdot n^{(i-1)} / K) + S_{\max}^{(i)} \rceil$ where $n^{(i-1)}$ is the number of neurons in the previous layer.

To bound memory and runtimes, we cap the number of retained output spikes per neuron (e.g., 42 for SHD/SSC). When spiking activity is unusually high, this cap can become active, which may prevent complete consumption of the input queue within the compiled compute budget. In practice, with $S_{\max}=42$ we consume approximately 100\% of input spikes on average in SSC layer 1 and layer 2 (see Appendix Figure \ref{fig:work_retention_ssc}), and we regularize spike activity to keep this fraction high while preserving accuracy. Increasing the cap improves consumption at the cost of runtime: for example, raising from 28 to 35 in layer 1 increased epoch time by $\sim$18\% due to additional scan steps and larger downstream queues, and empirically increased the consumed-spike fraction. When the cap does not activate (or as $S_{\max}$ and the compute budget are increased), our implementation recovers the exact event-based hard-reset forward dynamics and the corresponding Implicit Function Theorem/EventProp \citep{wunderlich2021} gradients. The cap is therefore a practical knob for trading memory/throughput against worst-case spiking activity, orthogonal to our core contributions (parallel scans and continuous-time spike solvers). Future work can explore adaptive strategies that tune this cap online based on observed activity. Notably, we designed a regularizer (see Section \ref{sec:spike_count_regularization}) that penalizes individual outlier neurons that trigger the cap, which is a more targeted way to mitigate this issue.

\subsection{Sequential Baseline}
\label{sec:sequential_baseline}
While the main text describes our parallel SNN implementation, we also developed a sequential event-based SNN implementation with exact hard-reset dynamics to serve as a baseline for speedup comparisons. In this sequential implementation, each neuron processes its input spikes one at a time in chronological order, updating its state and generating output spikes as needed. In order to make this baseline even more competitive, we use the \texttt{jax.lax.scan} primitive to efficiently loop over input spikes, and check for output spikes (note all \texttt{jax} methods we use are JIT-compiled). We also only call the solver when a spike is detected between input spikes using our lightweight analytical checks, which is an optimization over a naive implementation that calls the solver at every input spike. This sequential implementation serves as a benchmark to highlight the efficiency gains achieved by our parallel SNN approach.

\subsection{Dataset Preprocessing}
\label{sec:dataset_preprocessing}
\paragraph{Yin-Yang.} The Yin-Yang dataset \citep{kriener2022} is a 3-way classification dataset with 5 input spikes. The first two spike times are defined by the point ($x_1, x_2$) in the 2d plane on the Yin-Yang symbol. $t_\text{late}$ is a user-defined time parameter that governs the latest time at which a spike can occur. To symmetrize the input, spikes $(t_\text{late}-x_1, t_\text{late}-x_2)$ are added. Finally, an optional fifth spike, a bias spike, can be added to serve as both a reference time point and to boost neuronal activity levels in downstream neurons.

For the discretization experiment in Section \ref{sec:yinyang_experiment}, we trained separate models (5 random seeds) at different time step sizes $\Delta t \in \{0, 0.1, 0.25, 0.5, 1.0, 2.0, 5.0\}$ms, where $\Delta t = 0$ represents continuous spike times (our standard method) and $\Delta t > 0$ represents discrete-time constraints that round all spike times to the nearest multiple of $\Delta t$.

\paragraph{MNIST.}
We work with a latency-encoded representation of MNIST, which has 784 input channels, each with up to 1 input spike, corresponding to the 784 pixels from MNIST digit images. We linearly map pixels in the range $[1,255]$ to $[t_{\text{late}}, 0]$ so that brighter pixels arrive earlier. $t_{\text{late}}$ is set to $0.030$. Pixels with a value of 0 are ignored. As a means of data augmentation during training, we apply spike dropout at a rate randomly sampled from $(0, 0.2)$ and time jittering to the input spikes by sampling from a zero-mean Gaussian with standard deviation set to $0.003$. We set the maximum number of input spikes to $N_{\text{input}} = 784$, from which we derive the number of chunks for the input layer as described in Section \ref{sec:event_queues}.

\paragraph{SHD.}
\citet{cramer2022} introduce the SHD dataset, which is a 20-way classification task, where the 20 classes are spike representations of the spoken words 0 through 9 in English and German. There are 700 input channels each with multiple spikes. We truncate the input spike trains to 1 second and apply the data augmentations described in \citet{nowotny2025}, namely randomly shifting the input channels in the range $[-40,+40]$ and blending pairs of input spike trains from the same class. We set the maximum number of input spikes to $N_{\text{input}} = 14,000$, from which we derive the number of chunks for the input layer as described in Section \ref{sec:event_queues}.

\paragraph{SSC.}
The SSC dataset \citep{cramer2022} is a 35-way classification task, similar to SHD, where the 35 classes are spike representations of spoken words such as ``yes'', ``no'', ``up'', ``down'', etc. There are 700 input channels each with multiple spikes. We apply no data augmentation to this dataset. We set the maximum number of input spikes to $N_{\text{input}} = 14,000$, from which we derive the number of chunks for the input layer as described in Section \ref{sec:event_queues}.
 
\subsubsection{Weight Initialization}
In this work, the initial network weights are drawn from a normal distribution layerwise, denoted $\mathcal{N}(\mu^{(i)}, \sigma^{2}_i)$ where $i$ denotes the layer index and $\mu^{(i)}$ and $\sigma_i$ are defined as
\begin{align}
    \mu^{(i)} &= \frac{\alpha_{\mu}^{(i)} V_{\text{th}}}{n^{(i-1)}\nu^{(0)} \tau_{m}} \label{eq:mean} \\
    \sigma_i &= \frac{\alpha_{\sigma}^{(i)}V_{\text{th}}}{\sqrt{n^{(i-1)} \nu^{(0)}\tau_{m}}} \label{eq:std}
\end{align}
where $\alpha_{\mu}^{(i)}$ and $\alpha_{\sigma}^{(i)}$ are layerwise gain parameters, $V_{\text{th}}$ is the membrane threshold, $n^{(i-1)}$ is the number of neurons in the previous layer, $\nu^{(0)}$ is the average input spike rate (spikes per second) per neuron---note the assumption of a constant spike rate across layers---and $\tau_{m}$ is the membrane time constant. The intuition for these equations is to think of the voltage as a random variable and set the mean and variance of the weights such that the voltage mean is close to threshold and the voltage variance is controlled, specifically for a neuron in layer $i$ we want
\begin{align}
    \mathbb{E}[V^{(i)}] &\approx \alpha_{\mu}^{(i)} V_{\text{th}} = \mu^{(i)} n^{(i-1)} \nu^{(0)} \tau_{m} \\
    \text{Var}(V^{(i)}) &\approx (\alpha_{\sigma}^{(i)} V_{\text{th}})^{2} = (\sigma_i)^{2} n^{(i-1)} \nu^{(0)} \tau_{m}
\end{align}
where we think of $\mu^{(i)} n^{(i-1)} \nu^{(0)} \tau_{m}$ as the approximate expected voltage contribution from all incoming spikes over the timescale of the membrane time constant, and similarly for the variance. We set $\alpha_{\mu}^{(i)}$ and $\alpha_{\sigma}^{(i)}$ to 0.8 for all layers though more work is needed to determine optimal values for these parameters. At the start of training, we estimate $\nu^{(0)}$ by averaging the number of input spikes per neuron over the entire training dataset and dividing by the total input time duration in seconds over 10 batches of training data.
\vfill

\subsection{Hyperparameters}
\label{sec:hyperparameters}
\begin{table}[H]
    \caption{System hyperparameters across the 3 datasets used.}
    \centering
    \begin{tabular}{lllll}
\toprule
Parameter & Yin-Yang Value & MNIST Value & SHD Value & SSC Value \\
\midrule
Optimizer & Adam & Adam & Adam & Adam \\
Learning Rate Schedule & Cosine Decay & Exponential & Cosine Decay & Cosine Decay \\
Cosine Decay Steps & 6000 & — & 10000 & 88500 \\
Learning Rate & 0.02 & 0.02 & 0.001 & 0.001 \\
Learning Rate End Value & 0.0001 & 0.001 & 0.0001 & 0.0001 \\
Learning Rate Decay Factor & — & 0.95 & — & — \\
Learning Rate Warmup Steps & 2000 & 500 & 500 & 2000 \\
Delays Learning Rate & 0.02 & 0.02 & 0.001 & 0.001 \\
Delays Learning Rate Warmup Steps & 2000 & 500 & 500 & 2000 \\
Delays Learning Rate Decay Factor & — & 0.95 & — & — \\
Delays Learning Rate End Value & 0.0001 & 0.001 & 0.0001 & 0.0001 \\
Training Epochs & 300 & 300 & 500 & 500 \\
Early Stopping Patience & — & 30 & 100 & 100 \\
Hidden Layer Sizes & [50] & [350] & varies & [512, 512] \\
$\alpha_{\mu}$ & [0.8] & [0.8] & [0.8, 0.8] & [0.8, 0.8] \\
$\alpha_{\sigma}$ & [0.8] & [0.8] & [0.8, 0.8] & [0.8, 0.8] \\
Use Delays & [False] & [False] & [True, True, False] & [True, True, False] \\
Delay Activation Function & — & — & Softplus & Softplus \\
$\beta_{\text{SP}}$ & — & — & 25.0 & 5.0 \\
Delay Initialization Multiplier & — & — & 0.1 & 0.1 \\
Maximum Percent Missing Spikes & [0.0] & [0.0] & [0.0, 0.0] & [0.0, 0.0] \\
Weight Bump Steps & 2 & 2 & 2 & 2 \\
Weight Bumping Value & 0.02 & 0.02 & 0.02 & 0.02 \\
Output Spikes per Layer & [6] & [6] & [42, 28] & [42, 28] \\
Regularized Spike Count & [3.0] & [3.0] & [14.0, 14.0] & [14.0, 14.0] \\
$\lambda_{\text{reg}}$ & 1e-05 & 1e-05 & 0.001 & 0.001 \\
$\lambda_{\text{over-reg}}$ & 0.0 & 0.0 & 5e-05 & 5e-05 \\
$\tau_{\text{LI}}$ & 0.01 & 0.5 & 2.0 & 2.0 \\
$\tau_{\text{max}}$ & 0.02 & 0.5 & 4.0 & 4.0 \\
Logit Temperature Multiplier & 20.0 & 20.0 & 60.0 & 30.0 \\
Dataset Dropout Probability & 0.0 & 0.2 & 0.0 & 0.0 \\
Time Jitter Standard Deviation & — & 0.003 & 0.0 & 0.0 \\
Number of Blended Samples & — & — & 7644 & 0 \\
Dataset Maximum Time & — & — & — & — \\
Channel Shift Maximum & — & — & 40 & — \\
Batch Size & 128 & 128 & 256 & 256 \\
Solver Method & Newton-Raphson & Newton-Raphson & Newton-Raphson & Newton-Raphson \\
Solver Maximum Iterations & 14 & 14 & 14 & 14 \\
Solver Regularization & 0.01 & 0.01 & 0.01 & 0.01 \\
$\tau_s$ & 0.0005 & 0.005 & 0.005 & 0.005 \\
$\tau_m$ & 0.002 & 0.02 & 0.02 & 0.02 \\
$V_{\text{th}}$ & 1.0 & 1.0 & 1.0 & 1.0 \\
$V_{\text{reset}}$ & 0.0 & 0.0 & 0.0 & 0.0 \\
$t_{\text{late}}$ & 0.002 & 0.03 & — & — \\
\bottomrule
\end{tabular}

    \label{tab:hyperparams}
\end{table}

\subsection{Training Details}
\label{sec:training_details}
All hyperparameters used in our experiments are summarized in Table \ref{tab:hyperparams}, which we describe in more detail below. We train all models using the Adam optimizer \citep{kingma2014} and the learning rate schedules defined in Table \ref{tab:hyperparams}. We employ hyperparameter tuning and early stopping based on validation accuracy, with the exception of the SHD dataset where we use the test accuracy as per prior work \citep{hammouamri2024,schone2025}. Early stopping is used with a patience specified in Table \ref{tab:hyperparams}. The number of fully connected hidden layers and their sizes are specified in Table \ref{tab:hyperparams}, where the number of list elements corresponds to the number of hidden layers and the integer values correspond to the number of neurons in each hidden layer. The use of layerwise delays is similarly specified in Table \ref{tab:hyperparams}, where a value of True indicates that particular layer uses trainable synaptic delays and False indicates no delays are used. We experimented with several delay activation functions including ReLU, tanh + 1, exponential, and softplus, where the softplus function is defined as $\text{softplus}(x) = (1 / \beta_{\text{SP}}) \log(\exp(\beta_{\text{SP}} x) + 1)$, where $\beta_{\text{SP}}$ is a hyperparameter that controls the curvature of the softplus function. We found softplus to perform best in our experiments, where one possible explanation is that it provides a good balance between allowing for small delays with non-zero gradients (unlike ReLU). When delays are used, they are initialized from a uniform distribution in the range $[0, 0.1]$ seconds, where the right endpoint is specified in Table \ref{tab:hyperparams} as the Delay Initialization Multiplier. We employ a weight bumping scheme\footnote{Similar to the procedure described in \citep{goltz2021}.} that adds a small constant value (Weight Bumping Value) to all incoming weights of a neuron that has been silent for some number of batches (Weight Bump Steps). We specify the maximum number of output spikes per layer (Output Spikes per Layer)---see their usage in Appendix Section \ref{sec:event_queues}---and a target regularized spike count (Regularized Spike Count). The weighted leaky integrator readout (Equation \ref{eq:weighted_leaky_integrator_full}) has two parameters corresponding to the exponential time constant ($\tau_{\text{LI}}$) and the duration of the integration window ($\tau_{\text{max}}$). Our system's time constants are set according to Table \ref{tab:hyperparams}, where $\tau_{m}$ is the membrane time constant, $\tau_{s}$ is the synaptic time constant, $V_{\text{th}}$ is the membrane threshold, $V_{\text{reset}}$ is the reset potential after a spike, and $t_{\text{late}}$ controls the latency encoding of the MNIST dataset. We train on NVIDIA H100 GPUs with 80GB of memory.

\newpage
\section{Speedups from Parallel Spike Processing}
\label{sec:speedups_appendix}
\begin{figure}[H]
    \centering
    \includegraphics{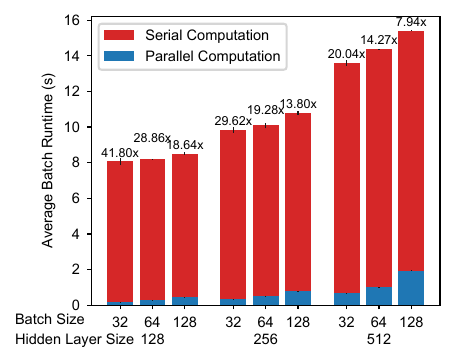}
    \caption{Speedup of our parallel SNN implementation over a sequential event-based SNN implementation with exact hard-reset dynamics on the SSC dataset as we vary the hidden dimension and batch size. The model architecture uses 2 feedforward hidden layers and delays. Averages are taken over 100 batches (forward and backward pass) over 3 runs and chunk size is set to 128.}
    \label{fig:speedup_ssc}
\end{figure}

\begin{table}[H]
\centering
\caption{Average training time (forward and backward pass) per batch in seconds on the SSC dataset for our parallel SNN implementation versus a sequential event-based SNN implementation with exact hard-reset dynamics. The model architecture uses 2 feedforward hidden layers and delays. Averages are taken over 100 batches (forward and backward pass) over 3 runs and chunk size is set to 128.}
\begin{tabular}{llll}
\toprule
Hidden Layer Size & Batch Size & Computation Mode & Avg. Runtime (std. dev.) \\
\midrule
\multirow[c]{6}{*}{128} & \multirow[c]{2}{*}{32} & Parallel & 0.193 (0.003) \\
 &  & Serial & 8.075 (0.212) \\
\cline{2-4}
 & \multirow[c]{2}{*}{64} & Parallel & 0.284 (0.003) \\
 &  & Serial & 8.195 (0.033) \\
\cline{2-4}
 & \multirow[c]{2}{*}{128} & Parallel & 0.457 (0.004) \\
 &  & Serial & 8.510 (0.067) \\
\cline{1-4} \cline{2-4}
\multirow[c]{6}{*}{256} & \multirow[c]{2}{*}{32} & Parallel & 0.332 (0.004) \\
 &  & Serial & 9.838 (0.161) \\
\cline{2-4}
 & \multirow[c]{2}{*}{64} & Parallel & 0.524 (0.004) \\
 &  & Serial & 10.093 (0.161) \\
\cline{2-4}
 & \multirow[c]{2}{*}{128} & Parallel & 0.782 (0.006) \\
 &  & Serial & 10.790 (0.118) \\
\cline{1-4} \cline{2-4}
\multirow[c]{6}{*}{512} & \multirow[c]{2}{*}{32} & Parallel & 0.679 (0.004) \\
 &  & Serial & 13.609 (0.165) \\
\cline{2-4}
 & \multirow[c]{2}{*}{64} & Parallel & 1.008 (0.007) \\
 &  & Serial & 14.383 (0.028) \\
\cline{2-4}
 & \multirow[c]{2}{*}{128} & Parallel & 1.939 (0.023) \\
 &  & Serial & 15.407 (0.032) \\
\bottomrule
\end{tabular}
\end{table}

\begin{table}[H]
\centering
\caption{Average training time (forward and backward pass) per batch in seconds on the SHD dataset for our parallel SNN implementation versus a sequential event-based SNN implementation with exact hard-reset dynamics. The model architecture uses 2 feedforward hidden layers and delays. Averages are taken over 100 batches (forward and backward pass) over 3 runs and chunk size is set to 128.}
\begin{tabular}{llll}
\toprule
Hidden Layer Size & Batch Size & Computation Mode & Avg. Runtime (std. dev.) \\
\midrule
\multirow[c]{6}{*}{128} & \multirow[c]{2}{*}{32} & Parallel & 0.189 (0.001) \\
 &  & Serial & 8.352 (0.022) \\
\cline{2-4}
 & \multirow[c]{2}{*}{64} & Parallel & 0.277 (0.002) \\
 &  & Serial & 8.167 (0.044) \\
\cline{2-4}
 & \multirow[c]{2}{*}{128} & Parallel & 0.449 (0.003) \\
 &  & Serial & 8.614 (0.034) \\
\cline{1-4} \cline{2-4}
\multirow[c]{6}{*}{256} & \multirow[c]{2}{*}{32} & Parallel & 0.328 (0.002) \\
 &  & Serial & 10.087 (0.044) \\
\cline{2-4}
 & \multirow[c]{2}{*}{64} & Parallel & 0.512 (0.002) \\
 &  & Serial & 10.316 (0.044) \\
\cline{2-4}
 & \multirow[c]{2}{*}{128} & Parallel & 0.772 (0.004) \\
 &  & Serial & 10.514 (0.061) \\
\cline{1-4} \cline{2-4}
\multirow[c]{6}{*}{512} & \multirow[c]{2}{*}{32} & Parallel & 0.672 (0.002) \\
 &  & Serial & 13.828 (0.037) \\
\cline{2-4}
 & \multirow[c]{2}{*}{64} & Parallel & 1.057 (0.004) \\
 &  & Serial & 14.530 (0.072) \\
\cline{2-4}
 & \multirow[c]{2}{*}{128} & Parallel & 1.880 (0.008) \\
 &  & Serial & 14.955 (0.097) \\
\bottomrule
\end{tabular}
\end{table}

\begin{table}[h]
\centering
\caption{Absolute speed and memory comparison of our method to prior work on the SHD dataset with batch size 256. All methods use 2 hidden layers with delays and 256 or 512 hidden units per layer. * denotes metrics reported in \citet{meszaros2025} on an RTX A5000; all other numbers were measured on an NVIDIA H100 (80GB). See text for details.}
\label{tab:speed_memory_comparison}
\begin{tabular}{lll}
\toprule
Method & Epoch Training Time (s) $\downarrow$ & Max Memory (MiB) $\downarrow$ \\
\midrule
256 \citep{hammouamri2024} & 157.89 & 8,978 \\
512 \citep{hammouamri2024} & 338.94 & 16,366 \\
256 \citep{meszaros2025} & 38.93 & 4,824 \\
512 \citep{meszaros2025} & 141.35 & 8,632 \\
256 (Ours) & 44.66 & 7,967 \\
512 (Ours) & 111.55 & 26,184 \\
\bottomrule
\end{tabular}
\end{table}

A natural question is how our method compares to optimized prior work in terms of speed and memory. In Table \ref{tab:speed_memory_comparison} we compare against two discrete-time methods: a surrogate-gradient approach \citep{hammouamri2024} and an exact-gradient CUDA implementation \citep{meszaros2025}. We measured our method and \citet{hammouamri2024} on an NVIDIA H100 (80GB), while the \citet{meszaros2025} numbers are taken as reported in their paper on an RTX A5000. Memory for \citet{hammouamri2024} is PyTorch peak allocated memory (\texttt{torch.cuda.max\_memory\_allocated()}), and ours is peak device memory reported by \texttt{jax.profiler}; neither accounts for peak reserved memory. All methods use matched 2-layer delayed models (256 or 512 hidden units), $\Delta t = 1$ms (though ours uses exact spike times---no discretization), and the full SHD input (700 channels). Our method performs favorably in terms of speed compared to \citet{hammouamri2024}, where hardware differences are controlled for, and is competitive in memory. \citet{hammouamri2024} does not support sub-millisecond spike times and refining the time grid further would substantially increase both compute and memory. In \citet{meszaros2025}, delays are discretized and implemented via per-neuron delay buffers, so holding the maximum delay fixed while decreasing $\Delta t$ increases the number of delay slots (and thus delay-buffer memory) approximately as $1/\Delta t$, and increases the number of time steps that must be computed. We do not claim a hardware-normalized ranking. Table \ref{tab:speed_memory_comparison} situates our approach in the same practical regime (seconds/epoch, single-digit to tens of GiB) as optimized discrete-time alternatives under matched architectures.

\section{Work Analysis}
\label{sec:work_analysis}
\begin{figure}[H]
    \centering
    \includegraphics{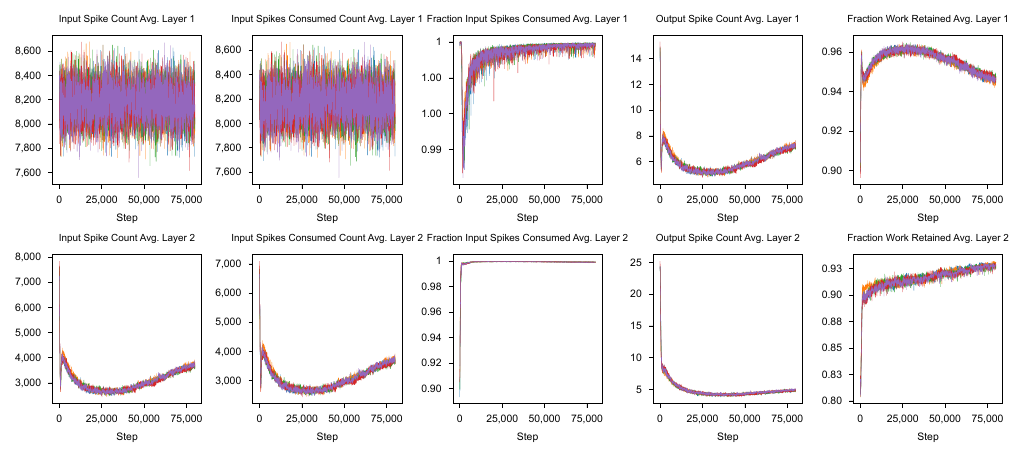}
    \caption{Spike metrics for 5 runs on the SSC dataset with 512 hidden units and chunk size 128. On average, layer one and two neurons consume approximately 100\% of input spikes. We also note that both layer one and layer two retain a very high percentage of work done (e.g., 80-90\%+).}
    \label{fig:work_retention_ssc}
\end{figure}

\begin{figure}[H]
    \centering
    \includegraphics{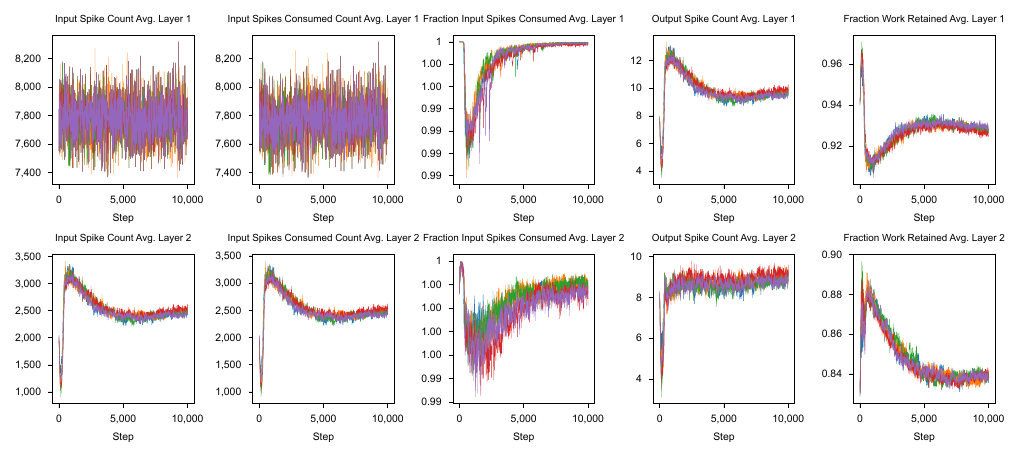}
    \caption{Spike metrics for 5 runs on the SHD dataset with 256 hidden units and chunk size 128. On average, layer one and two neurons consume approximately 100\% of input spikes. We also note that both layer one and layer two retain a very high percentage of work done (e.g., 80-90\%+).}
    \label{fig:work_retention_shd_256}
\end{figure}

\begin{figure}[H]
    \centering
    \includegraphics{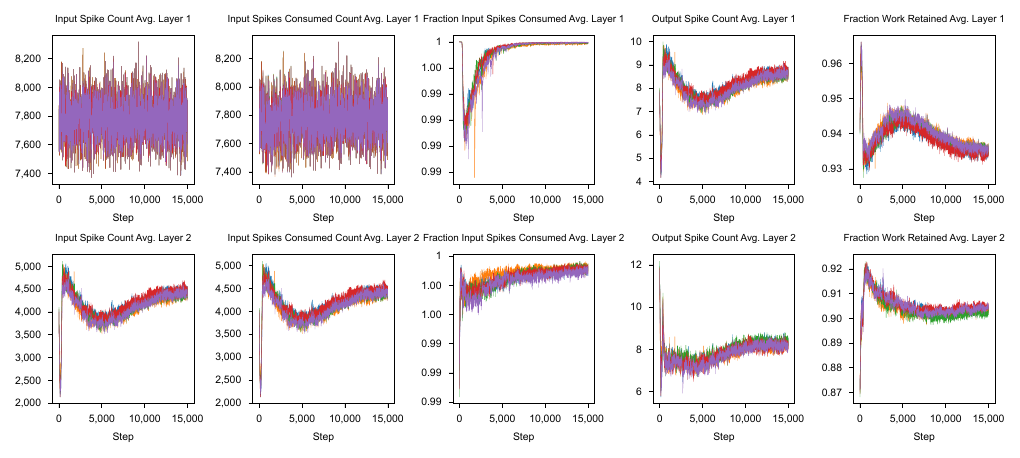}
    \caption{Spike metrics for 5 runs on the SHD dataset with 512 hidden units and chunk size 128. On average, layer one and two neurons consume approximately 100\% of input spikes. We also note that both layer one and layer two retain a very high percentage of work done (e.g., 80-90\%+).}
    \label{fig:work_retention_shd_512}
\end{figure}

Figures \ref{fig:work_retention_ssc}, \ref{fig:work_retention_shd_256}, and \ref{fig:work_retention_shd_512} show key spike metrics for our models. Input Spike Count Avg. is the average number of input spikes received per neuron per batch, Input Spikes Consumed Count Avg. is the average number of input spikes consumed per neuron per batch, Fraction Input Spikes Consumed Avg. is the ratio of consumed spikes to received spikes, Output Spike Count Avg. is the average number of output spikes emitted per neuron per batch, and Fraction Work Retained Avg. is the ratio of the consumed spikes to the number of input spikes processed (see below).

Since our method processes input spikes in chunks, some amount of work is discarded when a neuron spikes early in a chunk. However, we find that this effect is minimal in practice since neurons tend to spike sparsely relative to the number of input spikes they receive. In Figures \ref{fig:work_retention_ssc}, \ref{fig:work_retention_shd_256}, and \ref{fig:work_retention_shd_512}, we quantify the percentage of work retained during training on the SSC and SHD datasets. In particular, we define the work retention rate as the ratio of the total number of input spikes consumed to the number of input spikes processed. In other words, every time an input spike is visited---which may happen multiple times due to chunking---it contributes one unit of work to the denominator, but only contributes one unit of work to the numerator when it is consumed. So a high ratio implies that spikes are only visited once most of the time, indicating that chunking is not causing excessive wasted work. We find that both layer one and layer two retain a very high percentage of work done (e.g., 80-90\%+). The work retained will depend on the chunk size, the number of input spikes, and the firing rates of the neurons. We do not optimize for work retained in this work because our main focus is on accelerating training time.

\subsection{Efficiency Constraints and Regularization}
While our chunked parallel processing enables significant speedups in sparse firing regimes, we acknowledge that the theoretical efficiency of this approach is sensitive to the firing rate. In the worst-case scenario---such as a ``bursting'' regime where a neuron spikes within every chunk---the parallelism reverts to serial execution because the work done in a chunk subsequent to a spike must be discarded and recomputed. However, we argue that this constraint aligns with the fundamental premise of event-based SNNs, which prioritize energy efficiency through sparsity. To ensure our method operates within this optimal regime, we employ a spike-count regularizer (see Appendix \ref{sec:spike_count_regularization}) that penalizes excessive activity. This regularizer serves a dual purpose. It promotes biologically plausible sparse representations and explicitly maintains the computational efficiency of the parallel associative scan. In our experiments, this mechanism successfully maintained a high work-retention rate (see Figure \ref{fig:work_retention_ssc}) confirming that the network learns to utilize the configurations where our parallel method excels, rather than diverging into inefficient high-firing modes. Future work can explore more sophisticated regularization techniques or adaptive chunking strategies to further enhance efficiency across diverse spiking regimes.

\end{document}